\def\eqref#1{equation~\ref{#1}}
\def\1{\bm{1}}
\DeclareMathAlphabet{\mathsfit}{\encodingdefault}{\sfdefault}{m}{sl}
\SetMathAlphabet{\mathsfit}{bold}{\encodingdefault}{\sfdefault}{bx}{n}
\DeclareMathOperator*{\argmax}{arg\,max}
\newtheorem{theorem}{Theorem}[section]
\newtheorem{assumption}[theorem]{Assumption}
\setlist[itemize]{noitemsep,leftmargin=*,topsep=0pt}
\setlist[enumerate]{noitemsep,leftmargin=*,topsep=0pt}
\title{Optimal Aggregation of LLM and PRM Signals for Efficient Test-Time Scaling}
\author{%
  Peng Kuang$^{1}$,
  Yanli Wang$^{2}$,
  Xiaoyu Han$^{3}$,
  Yaowenqi Liu$^{3}$,
  Kaidi Xu$^{4}$,
  Haohan Wang$^{3}$\\
  % \vspace{-9mm}\\
  % \AND
  $^1$Zhejiang University, 
  $^2$Imperial College London, 
  $^3$University of Illinois Urbana-Champaign \\
  $^4$Drexel University\\
  % \texttt{pengkuang@zju.edu.cn, yanli.wang22@imperial.ac.uk,} \\
  \texttt{pengkuang@zju.edu.cn,}
  % \texttt{\{xiaoyu27, yl140\}@illinois.edu,} \\
  \texttt{kx46@drexel.edu, haohanw@illinois.edu} 
}
\begin{document}

\maketitle

\begin{abstract}
Process reward models (PRMs) are a cornerstone of test-time scaling (TTS), {\color{black}as it is designed to verify and select the best responses from large language models (LLMs), significantly improving the performances}. However, this promise is challenged by recent benchmarks where simple majority voting, which ignores PRM signals, occasionally outperforms standard PRM-based selection. This raises a critical question: How can we effectively utilize verification signals from PRMs for TTS? To address this, we start by developing a theoretical framework for optimally combining signals from both the LLM and the PRM. Our framework reveals that the optimal strategy is a weighted aggregation of responses, a strategy whose effectiveness hinges on estimating weights that capture the complex interplay between the models.
Based on our theoretical results, we empirically show that these optimal weighting functions differ significantly across LLM-PRM pairs and, notably, often assign substantial negative weights.
Motivated by these insights, we propose efficient pre-computation methods to calibrate these weighting functions.
Extensive experiments across 5 LLMs and 7 PRMs demonstrate that our calibration method significantly boosts the TTS efficiency, surpassing the performance of vanilla weighted majority voting while using only $21.3\%$ of the computation.
Ultimately, our work demonstrates that investing in a more intelligent aggregation strategy can be a more convincing path to performance gains than simply scaling test-time computation.
\end{abstract}

\section{Introduction}
The pursuit of advanced reasoning in Large Language Models (LLMs) has largely been driven by scaling up model size and training data \citep{10.5555/3600270.3602281}. While effective, this approach entails prohibitive computational costs \citep{snell2025scaling}. An increasingly popular alternative is Test-Time Scaling (TTS)\citep{liu_bag_2025,madaan_self-refine_2023}, a paradigm that enhances the performance of a fixed LLM by allocating more computational resources at inference time. A prominent TTS strategy involves generating a multitude of candidate solutions and then selecting the most promising one. This "generate-and-select" framework relies heavily on the quality of the selection mechanism, which is tasked with identifying the correct response from a pool of diverse, model-generated outputs. The central challenge, therefore, lies in designing a selection strategy that can effectively harness the collective evidence from multiple generated responses to maximize final performance.

To address this selection problem, a common approach is to employ a Process Reward Model (PRM) \citep{lightman2024lets,li2025process,zheng2024processbenchidentifyingprocesserrors}, a sophisticated verifier trained on human feedback to score the quality of reasoning steps. The standard protocol, Best-of-N (BoN), simply selects the answer from the single response that receives the highest PRM score. Intuitively, this should leverage the detailed, step-by-step evaluation capabilities of the PRM. However, this intuition is challenged by a surprising and counter-intuitive empirical reality: on recent benchmarks \citep{zhang2025lessonsdevelopingprocessreward}, the far simpler method of majority voting \citep{wang2023selfconsistency}, which completely ignores the expensive PRM and relies solely on the consensus of the LLM's own generations, can outperform PRM-guided BoN. This paradox suggests a fundamental misalignment in how we utilize verifier signals. If a powerful, costly-to-train PRM can be bested by a simple vote count, it implies we are failing to properly integrate its nuanced feedback. 
% This raises a critical question: 
% \begin{quote}
% \vspace{-1mm}
% % \textit{How can we optimally combine the signals from both the base LLM (via generation frequency) and the PRM (via scores) to create a truly effective and efficient TTS strategy?}
% \textit{How can we optimally combine the signals from both the base LLM and the PRM to make the TTS strategy more efficient and effective?}
% \vspace{-1mm}
% \end{quote}
% \hwc{``truly effective and efficient'' is too strong, setting the readers' expectation too high. I don't think we need this kind of central question, let's position this paper more like a better method.}

In this work, we dive into the interactions between LLMs and PRMs to find better aggregation of signals from both models for more efficient TTS.
% \hwc{same here, I would recommend you to tune down the langauges instead of raising it to a level that you solved the problem}
We begin by formalizing the task of aggregating responses as a Maximum a Posteriori (MAP) estimation problem, revealing that the optimal aggregation strategy is not to simply pick the best-scoring response, but to perform a weighted majority vote. Interestingly,
% \hwc{same here, ``interestingly'' is probably a better word}
the optimal weight for each response is a function of two distinct components: a term derived from the PRM's score, reflecting the quality of the reasoning, and a term derived from the LLM's own reliability. This formulation provides a principled framework for unifying the evidence from both the generator and the verifier.

To understand the practical implications of this theoretical result, we conduct an empirical analysis to characterize the optimal weighting function, uncovering two critical insights. First, the shape of the optimal function is highly dependent on the specific LLM-PRM pair, indicating that a one-size-fits-all approach is inherently suboptimal. Second, we find that optimal functions consistently assign \textit{negative} weights to responses with low PRM scores. This reveals a key deficiency in existing methods \citep{wang-etal-2024-math}: they fail to leverage the negative evidence provided by a low-quality response. A response judged to be poor by the PRM should not simply be ignored; it should actively count \textit{against} its proposed answer.

Motivated by these insights, we introduce simple yet effective calibration methods to learn approximations of these optimal weighting functions from a small, one-time pre-computed dataset. We propose both non-parametric and parametric approaches that explicitly capture the model-specific nature of the weights and incorporate the mechanism of penalizing low-quality responses. Extensive experiments across 5 different LLMs and 7 PRMs on the MATH \citep{hendrycks_measuring_2021} datasets demonstrate the superiority of our approach. Our calibrated weighted voting method consistently outperforms baselines, including standard BoN and vanilla weighted voting. Notably, it achieves higher accuracy than these methods while using approximately 37.1\% and 21.3\% of the test-time computation, demonstrating a significant improvement in TTS efficiency.
In summary, our contributions are:
\begin{itemize}
    \item We develop a theoretical framework for optimally aggregating LLM generations and PRM scores, demonstrating that the solution is a weighted majority vote combining signals from both models.
    \item We empirically characterize the optimal weighting function, revealing its model-dependent nature and, interestingly, the importance of assigning negative weights to low-quality responses.
    \item We propose practical calibration methods to learn these weighting functions, enabling efficient and effective test-time scaling.
    \item Through extensive experiments, we show that our calibrated aggregation strategy significantly improves TTS efficiency, achieving superior performance with substantially less computational overhead.
\end{itemize}

\section{Related work}

\textbf{Test-Time Scaling.}
The pursuit of improved model performance without retraining has led to the paradigm of Test-Time Scaling (TTS), which allocates more computational resources at inference time \cite{zhang_survey_2025}. A dominant strategy within TTS is the "generate-and-select" framework, often formalized as Best-of-N (BoN) sampling, where $N$ candidate solutions are generated and a selection mechanism chooses the best one \cite{ichihara_evaluation_2025}.
A foundational method in this area is Self-Consistency (SC), which samples multiple diverse reasoning paths and selecting the final answer via a simple majority vote \cite{wang2023selfconsistency}. The intuition is that an answer derived from multiple independent lines of thought is more likely to be correct. While effective, SC's primary drawback is its high computational cost. This has motivated more efficient variations, such as Confidence-Informed Self-Consistency (CISC), which introduced a weighted majority vote based on the model's self-assessed confidence to reduce the required sample size \citep{taubenfeld_confidence_2025}.
In parallel, other approaches use an external verifier, like a PRM, to select the single highest-scoring candidate \citep{uesato2022solvingmathwordproblems}. Our work builds on the idea of weighted voting, but instead of relying on the LLM's self-assessment, we derive weights from a principled theoretical framework that combines both the LLM's consensus signal and the external verifier's scores.

\textbf{Reward Modeling.} 
A crucial component of many TTS strategies is an external verifier, or reward model (RM), trained to score the quality of generated responses. An early, influential work by \cite{cobbe2021trainingverifierssolvemath} demonstrated that training a dedicated verifier to select the best solution from many candidates could improve performance on math word problems more effectively than fine-tuning the generator itself \cite{uesato2022solvingmathwordproblems}. This spurred a distinction between two supervision strategies: Outcome Reward Models (ORMs), which are trained on the correctness of the final answer, and Process Reward Models (PRMs), which are trained on step-by-step human feedback. An initial comparison by \cite{uesato2022solvingmathwordproblems} found that ORMs could achieve similar final-answer accuracy with less supervision, but PRMs were necessary to ensure the faithfulness of the reasoning process \citep{zheng2024processbenchidentifyingprocesserrors}. Subsequent work by \cite{lightman2024lets} solidified the superiority of PRMs on more challenging tasks, establishing process supervision as a key technique for building reliable verifiers, despite its high annotation cost \citep{wang-etal-2024-math}. Our work focuses on how to best leverage the signals from these powerful but expensive-to-train PRMs.

\section{Optimal Response Aggregation}

In this section, we aim to explore the optimal aggregation strategy for signals from the LLM and PRM. We start by formalizing this as a Maximum a Posteriori (MAP) estimation problem, and derive an optimal aggregation strategy in Section \ref{sec:theory}. Then, in Section \ref{sec:empirical_analysis}, we manage to estimate the quantities in the optimal aggregation strategy empirically and offer a few critical insights on the optimal weighting strategy.

\subsection{Theoretical Analysis of Optimal Response Aggregation}
\label{sec:theory}
\textbf{Problem Setup.} Let $M$ be the LLM and $V$ be the PRM (Verifier). For a single prompt, $M$ generates an ensemble of $L$ responses, $\mathcal{G} = \{g_1, g_2, \dots, g_L\}$. Each response $g_i$ consists of a reasoning process $r_i$ and a final answer $s_i = f(r_i)$. The PRM $V$ evaluates each generation $g_i$ and produces a scalar score $p_i$. Let $\mathcal{P} = \{p_1, p_2, \dots, p_L\}$ be the set of these scores. The set of unique candidate answers is $\mathcal{A} = \{\alpha_1, \dots, \alpha_m\}$. Our objective is to determine the most probable true answer $\hat{\alpha}$ given all available evidence.

We aim to find the answer $\alpha_k$ that maximizes the posterior probability $P(\alpha_k | \mathcal{G}, \mathcal{P}, M, V)$.
By Bayes' theorem, and assuming a uniform prior over answers $P(\alpha_k|M, V)$, this is equivalent to maximizing the likelihood of the evidence:
\begin{equation}
   \hat{\alpha} = \argmax_{\alpha_k \in \mathcal{A}} P(\mathcal{G}, \mathcal{P} | \alpha_k, M, V)
\end{equation}
We can decompose this likelihood into two factors: $P(\mathcal{G}, \mathcal{P} | \alpha_k, M, V) = P(\mathcal{P} | \mathcal{G}, \alpha_k, V) \times P(\mathcal{G} | \alpha_k, M)$. This reflects the causal process: the LLM $M$ generates responses $\mathcal{G}$, and then the Verifier $V$ produces scores $\mathcal{P}$ based on $\mathcal{G}$. 
% The log-likelihood is then: $\text{LL}(\alpha_k) = \log P(\mathcal{P} | \mathcal{G}, \alpha_k, V) + \log P(\mathcal{G} | \alpha_k, M)$.
To make this tractable, we introduce two conditional independence assumptions:
\begin{assumption}[Score and Generation Independence]
\label{assump:indep}
The PRM score $p_i$ for a generation $g_i$ is conditionally independent of all other generations, given $g_i$ and the true answer $\alpha_k$. The LLM generations $g_i$ are conditionally independent of each other, given the true answer $\alpha_k$.
$$P(\mathcal{P} | \mathcal{G}, \alpha_k, V) = \prod_{i=1}^{L} P(p_i | g_i, \alpha_k, V),P(\mathcal{G} | \alpha_k, M) = \prod_{i=1}^{L} P(g_i | \alpha_k, M)$$
\end{assumption}
With these assumptions, the log-likelihood becomes a sum over individual responses: $\text{LL}(\alpha_k) = \sum_{i=1}^{L} \log P(p_i | g_i, \alpha_k, V) + \sum_{i=1}^{L} \log P(g_i | \alpha_k, M)$.
% \begin{equation}
%     \text{LL}(\alpha_k) = \sum_{i=1}^{L} \log P(p_i | g_i, \alpha_k, V) + \sum_{i=1}^{L} \log P(g_i | \alpha_k, M)
% \end{equation}
We hypothesize that under the condition $\alpha_k$, a generation $g_i$ with answer $s_i = \alpha_k$ is correct ($c_i=1$), and incorrect ($c_i=0$) otherwise. The term $P(g_i | \alpha_k, M)$ is simplified to {\color{black}$P(c_i | \alpha_k, M)$}, where we assume a simple probability model for the LLM: it produces the correct answer with probability $q_M$ and any specific incorrect answer with probability $(1-q_M)/(m-1)$.

\begin{theorem}[Optimal Aggregation Score]
\label{thm:scoring_vote}
Under the assumptions above, maximizing the log-likelihood is equivalent to maximizing the score:
$$\text{Score}(\alpha_k) = \sum_{i: s_i=\alpha_k} w_i, \quad \text{where } w_i = \underbrace{\log \frac{P(p_i | c_i=1, V)}{P(p_i | c_i=0, V)}}_{\text{PRM Signal Term}} + \underbrace{\log \frac{q_M \cdot (m-1)}{1-q_M}}_{\text{LLM Signal Term}}$$
\end{theorem}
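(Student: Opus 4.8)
The plan is to start from the log-likelihood decomposition $\text{LL}(\alpha_k) = \sum_{i=1}^{L} \log P(p_i \mid g_i, \alpha_k, V) + \sum_{i=1}^{L} \log P(g_i \mid \alpha_k, M)$ already established from Assumption~\ref{assump:indep}, and simplify each of the two sums by exploiting the correctness-indicator hypothesis. First I would handle the PRM term: since $g_i$ with answer $s_i$ determines $c_i$ once $\alpha_k$ is fixed (namely $c_i = \mathbf{1}[s_i = \alpha_k]$), the score distribution $P(p_i \mid g_i, \alpha_k, V)$ depends on $g_i$ and $\alpha_k$ only through $c_i$, so it equals $P(p_i \mid c_i, V)$. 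Splitting the sum over the index set $\{i : s_i = \alpha_k\}$ (where $c_i=1$) and its complement (where $c_i=0$) gives $\sum_{i: s_i=\alpha_k} \log P(p_i \mid c_i=1, V) + \sum_{i: s_i \neq \alpha_k} \log P(p_i \mid c_i=0, V)$.

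Next I would handle the LLM term using the stated probability model: $P(s_i \mid \alpha_k, M) = q_M$ when $s_i = \alpha_k$ and $(1-q_M)/(m-1)$ otherwise. So $\sum_{i=1}^L \log P(g_i \mid \alpha_k, M) = \sum_{i: s_i = \alpha_k} \log q_M + \sum_{i: s_i \neq \alpha_k} \log\frac{1-q_M}{m-1}$. The key algebraic move is then to add and subtract the ``$c_i = 0$'' versions of each term on the set $\{i : s_i = \alpha_k\}$, so that the full expression becomes a constant (independent of $k$) plus a sum over just the matching responses:
\begin{equation*}
\text{LL}(\alpha_k) = C + \sum_{i: s_i = \alpha_k} \left[ \log\frac{P(p_i \mid c_i=1, V)}{P(p_i \mid c_i=0, V)} + \log\frac{q_M}{(1-q_M)/(m-1)} \right],
\end{equation*}
where $C = \sum_{i=1}^L \log P(p_i \mid c_i=0, V) + L\log\frac{1-q_M}{m-1}$ does not depend on $\alpha_k$ (the first piece because it sums over all $i$ regardless of which answer is hypothesized, the second because it is just $L$ copies of the same constant). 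Recognizing the bracketed quantity as exactly $w_i$ and dropping the constant $C$ yields $\hat\alpha = \argmax_{\alpha_k} \text{Score}(\alpha_k)$ with $\text{Score}(\alpha_k) = \sum_{i: s_i = \alpha_k} w_i$, which is the claim.

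I do not anticipate a serious obstacle, since the argument is essentially bookkeeping; the one point requiring care is the justification that the ``baseline'' sums genuinely telescope into a $k$-independent constant. This relies on the fact that $\sum_{i: s_i = \alpha_k}(\cdot) + \sum_{i: s_i \neq \alpha_k}(\cdot) = \sum_{i=1}^L (\cdot)$ partitions the responses for \emph{every} candidate $\alpha_k$ in exactly the same way up to reindexing, so adding $\log P(p_i \mid c_i=0,V)$ to every matching response and subtracting it again off the non-matching block leaves a sum over all $L$ responses that is manifestly constant. A secondary subtlety worth stating explicitly is the reduction $P(g_i \mid \alpha_k, M) \mapsto P(s_i \mid \alpha_k, M)$ and $P(p_i \mid g_i,\alpha_k,V) \mapsto P(p_i \mid c_i, V)$: both follow from the modeling hypothesis that, conditional on $\alpha_k$, the only feature of $g_i$ that matters is whether its answer is correct. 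Once these two reductions and the constant-extraction are in place, the theorem follows immediately.
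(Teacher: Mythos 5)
Your proposal is correct and follows essentially the same route as the paper's own derivation: split each sum over the matching versus non-matching responses, add and subtract the $c_i=0$ baseline so that the non-matching block completes to a $k$-independent constant, and read off the weight $w_i$ as the sum of the two log-ratios. The only cosmetic difference is that you extract the constant $C$ for both terms simultaneously while the paper treats the PRM and LLM terms in separate parts, and you make explicit the reduction $P(p_i \mid g_i, \alpha_k, V) \mapsto P(p_i \mid c_i, V)$ that the paper leaves implicit until the final statement.
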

\begin{proof}
The full derivation is in Appendix \ref{app:proofs}. The key insight is that the log-likelihood can be rearranged into a sum of weights for responses voting for $\alpha_k$, plus terms that are constant with respect to $\alpha_k$ and can be dropped from the argmax. The weight $w_i$ for each vote combines signals from two sources: the PRM's score (via the likelihood ratio of the score $p_i$ occurring for correct vs. incorrect reasoning) and the LLM's intrinsic reliability (via the term involving $q_M$), also referred to as question difficulty \citep{snell2025scaling}.
\end{proof}

\subsection{Empirical Analysis of the Optimal Weighting Function}
\label{sec:empirical_analysis}

\textbf{Instantiating the optimal weights}.
To instantiate the optimal weight $w_i$ from Theorem \ref{thm:scoring_vote}, we perform a \textit{per-question estimation} using the ground-truth labels for the specific set of $L$ responses generated for that question.
To estimate the PRM Signal term, we apply a separate Kernel Density Estimation (KDE) on the logit space for each individual question to estimate the score distributions $P(p | c, V)$ on the specific question. For details of our KDE estimation in the logit space, please refer to Section \ref{sec:nonparam}.
To estimate the LLM Signal term, we simply set $q_M$ to be the true accuracy of the $L$ responses for that specific question.

\begin{figure}[t]
    \centering
    % Placeholder for performance comparison and weight function plots
    \includegraphics[width=0.9\linewidth]{./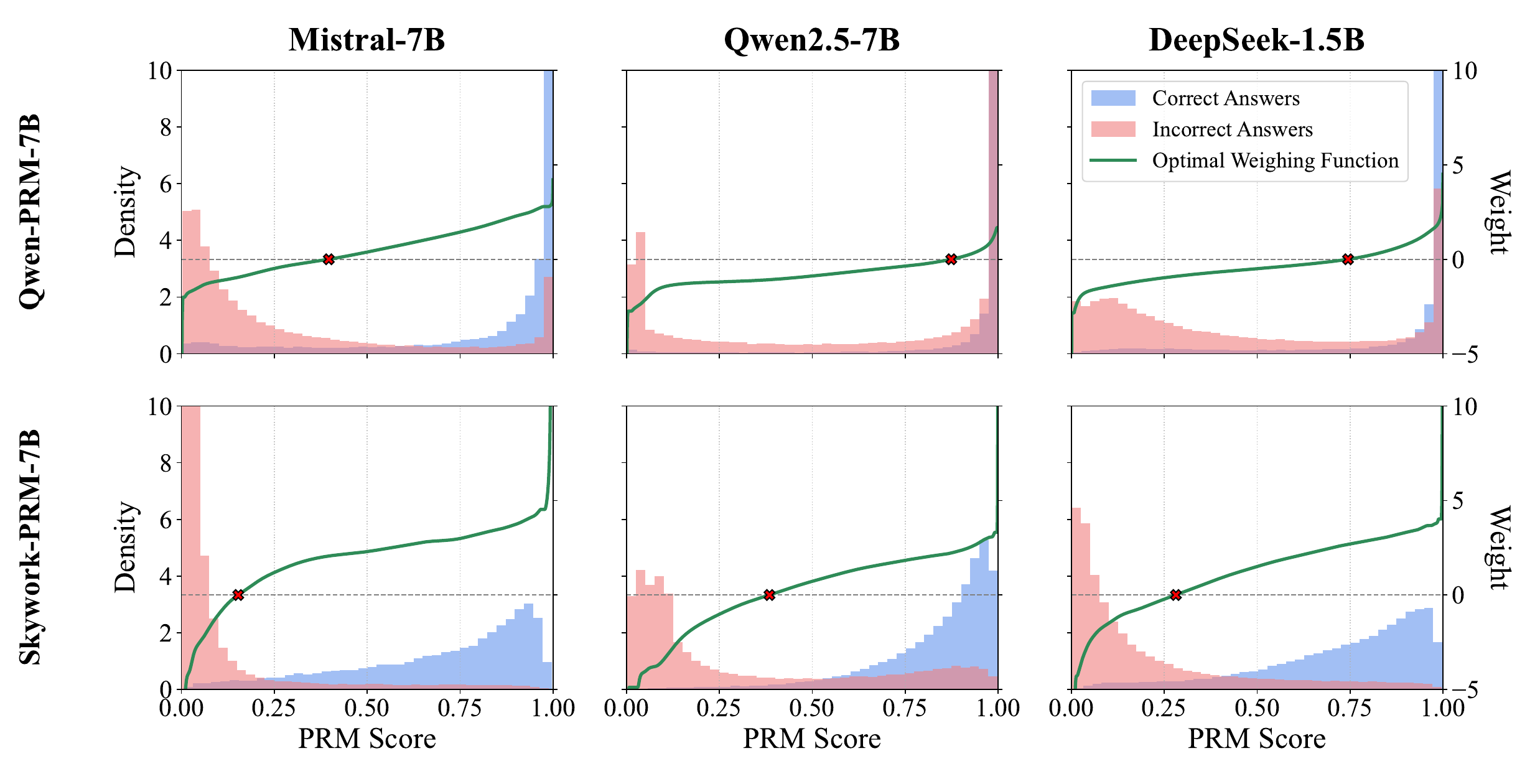}
    \vspace{-4mm}
    \caption{The PRM score distributions and optimal weighing functions on 6 combinations of LLM-PRM pairs. \textbf{Left y-axis:} the probability density of the PRM scores. \textbf{Right y-axis:} the optimal weights $w^*(p)$ learned via KDE for different LLM-PRM pairs. {\color{black}The zero-crossing point is marked with a red cross.} Note their model-dependent nature and the presence of negative weights for low PRM scores.}
    \label{fig:optimal_weights}
\end{figure}

% \textbf{optimal Aggregation as a tighter upper bound.} In previous works, Pass@N is usually considered as the upper bound, which considers a question as solved if at least one response is correct. However, such a bound ignores the intrinsic difficulty in 
% As shown in Figure \ref{fig:Scaling}, 

\textbf{Characterizing the optimal weighting function}.
Taking the PRM's ability to distinguish correct and wrong answers into account, this optimal aggregator provides a much tighter performance upper bound than Pass@N, demonstrating the potential of our framework, as shown in Figure \ref{fig:Scaling}. More importantly, analyzing the structure of the learned weighting function $w^*(p)$ (Figure \ref{fig:optimal_weights}) reveals two critical insights:

\begin{itemize}
    \item \textbf{Weighting functions are highly model-dependent.} The shape of the optimal function varies dramatically depending on the specific LLM and PRM being used. A simple, fixed function (e.g., using the PRM score directly as a weight) is unlikely to be optimal across different model pairs. This underscores the necessity of a calibration procedure tailored to the specific models in use.

    \item \textbf{Presence of negative weights.} An interesting and consistent finding is that low PRM scores are mapped to negative weights. This implies that a response deemed incorrect by the PRM provides strong evidence \textit{against} its proposed answer, and repetition of low-quality responses does not add to the likelihood of their answer being correct. Standard methods like Best-of-N, which only consider the top-scoring candidate, or majority voting, which ignores scores entirely, fail to leverage this powerful negative signal. An effective aggregation strategy must penalize answers supported by low-quality reasoning.
\end{itemize}

These findings motivate the need for a practical method that can approximate these complex, non-linear, and often negative weighting functions without requiring ground-truth labels at test time. We address this challenge in the following section.

\section{Practical Calibration Methods}
\label{sec:calibration}
The optimal analysis confirmed the need for a calibrated, model-specific weighting function. 
We now introduce practical methods to learn these functions using a one-time, pre-computed calibration set $D_{cal}=\{(r_1, p_1, c_1),...,(r_n, p_n, c_n)\}$. 
Once a weighting function $w(p)$ is learned, the final answer is selected by a weighted vote: $\hat{\alpha} = \argmax_{\alpha_k \in \mathcal{A}} \sum_{i:s_i=\alpha_k} w(p_i)$.

\subsection{Non-parametric Weighting functions}
\label{sec:nonparam}

One of the most straightforward ways towards the optimal aggregation strategy is to directly estimate the unknown quantities in the optimal weighting function \ref{thm:scoring_vote}, i.e., PRM score distributions $P(p | c=1, V)$, $P(p | c=0, V)$, and LLM reliability $q_M$.

\textbf{Estimating PRM score distribution.} To capture the nuances of the PRM score distribution on different LLM and PRM combinations, we apply the Kernel Density Estimation (KDE). Compared to other estimation methods, such as histogram estimation or parametric estimations, KDEs are smooth, continuous, and more flexible. However, while the PRM score is within the probability space between 0 and 1, KDEs are not bounded, spilling probability density outside this range. Consequently, we first convert the scores from the probability space to the logit space with the logit function $\text{logit}(p)=\log(\frac{p}{1-p})$. Then, we perform KDE of the scores within the logit space. Specifically, the PRM score distribution is estimated as:
\begin{equation}
\hat{f}_c(p) = \frac{1}{|D_c|\cdot h} \sum_{i \in D_c} K\left(\frac{\text{logit}(p) - \text{logit}(p_i)}{h}\right)
\label{eq:kde}
\end{equation}
where $D_c=\{i|c_i=c\}$ separates responses within the calibration set $D_{cal}$ according to their correctness $c_i$. $K$ and $h$ are the kernel and bandwidth of KDE. 

\textbf{Estimating LLM reliability.} To estimate $q_M$ at test time without labels, we first train a simple binned probability calibrator $g(\cdot)$ on the PRM scores from the calibration set. During inference, we calculate the calibrated probability for each of the $L$ generated responses $D_{test}=\{(r'_1, p'_1), ...,(r'_L, p'_L)\}$ to the test question and approximate $q_M$ as their average, i.e., $\hat{q}_M = \frac{1}{|D_{test}|}\sum_{i \in D_{test}}g(p'_i)$.

Given the estimations above, according to Equation \ref{thm:scoring_vote}, we have the estimated weighting function: 
$$w_{KDE}(p)=\log(\hat{f}_1(p)) - {\log(\hat{f}_0(p))}+\log({\hat{q}_M}) + \log(m-1) - \log(1-\hat{q}_M)$$
This KDE is the practical counterpart to the optimal estimator, where the PRM score distribution is also estimated on $D_{test}$ with additional access to the correctness of test responses $c'_i$.

% \textbf{PRM Log-Likelihood Ratio.} In contrast to the optimal's pre-question estimation, here we perform a \textbf{dataset-wide estimation}. We apply KDE to the scores of all correct and all incorrect responses across the entire calibration set to learn global densities $P(p|c=1, V)$ and $P(p|c=0, V)$. This yields a single, reusable function that maps any raw PRM score to its log-likelihood ratio.

\subsection{Parametric Weighting Functions}
As an alternative, we explore simpler parametric forms for $w(p)$, optimizing parameters on the calibration set via grid search. These methods are guided by our insight about the importance of a zero-crossing point, controlled by the parameter $b$. This parameter acts as a threshold, making the weight positive for scores above it and negative for scores below, directly implementing the penalization of low-quality responses.

\textbf{Logit Weighting.} Inspired by the log-ratio form in our theorem, we model the weight as:
$$ w_{logit}(p) = \text{logit}(p) - \text{logit}(b) $$

\textbf{Linear Weighting.} As a simpler baseline, we model the weight as:
$$ w_{linear}(p) = p - b $$
During grid search, the parameter $b$ is searched within the range $[0,1]$ and $[-1,1]$ for Logit Weighted Voting (\textbf{Logit WV}) and Linear Weighted Voting (\textbf{Linear WV}), respectively. {\color{black}The objective used in the grid search is the accuracy on the held-out calibration set.}
\section{Experiments}
\label{sec:experiments}

In this section, we first conduct a comprehensive evaluation of the scaling methods across 35 combinations of LLM and PRM in Section \ref{sec:exp_main}. Then we dive into the principles of the proposed methods in Section \ref{sec:exp_analysis}.

\subsection{Experimental Setup}
\textbf{Models.} To capture the complexities of signal aggregation in practice, we use 5 LLMs across 3 model series 
(Mistral-7B \citep{wang-etal-2024-math}, Qwen2.5-1.5B/7B \citep{yang_qwen25-math_2024}, DeepSeek-1.5B/7B \citep{deepseek-ai_deepseek-r1_2025}) and 7 PRMs based on Qwen (Qwen2.5-PRM800K \citep{song_prmbench_2025}, Qwen2.5-PRM-7B \citep{zhang2025lessonsdevelopingprocessreward}, Skywork-PRM-1.5/7B \citep{he_skywork-o1_2024}) , Llama (Llama3.1-8B-PRM-Mistral/DeepSeek \citep{xiong_implementation_2024}), and Mistral (math-shepherd-7b-prm \citep{wang-etal-2024-math}) series. During generation, we set the top-p and temperature configuration to 0.9 and 0.7, respectively.

\textbf{Data.} To simulate the scenarios where the reliability of the LLM signals varies, we evaluate performance on task with various difficulties, the MATH training set (MATH) and test set (MATH500) \citep{hendrycks_measuring_2021}. {\color{black}For the MATH dataset, we select the first 2k out of 7.5k questions as the pool for calibration, and randomly sample 1k questions from the pool as the calibration set for each run. Similarly, for MATH500, the calibration pool size is 200, and we randomly sample 100 questions from the pool as the calibration set. Questions outside of the pool are used as the test set. We run each experiment 3 times with different random seeds.} For the MATH500 dataset, we sample 112 responses from each LLM for each question. For the MATH dataset, we sample 32 responses from each LLM for each question, due to its large size. Then, these collected responses are scored by each of the 7 PRMs. As such, we ensure all the scaling methods are using identical responses and PRM scores for fair comparison.

\textbf{Baselines.} We compare our calibrated weighted voting against several methods:
\begin{itemize}
    \item \textbf{Majority Vote}: The answer with the most votes is selected, ignoring PRM scores. This is the standard self-consistency approach. $\hat{\alpha} = \argmax_{\alpha_k \in \mathcal{A}} \sum_{i=1}^L \mathbb{I}(s_i = \alpha_k)$.
    \item \textbf{Best-of-N (BoN)}: The answer from the single response with the highest PRM score is chosen. $\hat{\alpha} = s_{i^*}$ where $i^* = \argmax_i p_i$.
    \item \textbf{Vanilla Weighted Vote}: A weighted vote where the raw, uncalibrated PRM score $p_i$ is used as the weight. $\hat{\alpha} = \argmax_{\alpha_k \in \mathcal{A}} \sum_{i:s_i=\alpha_k} p_i$.
\end{itemize}
We also report two theoretical bounds: \textbf{Pass@N}, which considers a problem solved if at least one response is correct, and \textbf{Optimal} as discussed in Section \ref{sec:empirical_analysis}. 
% For all methods, we vary the number of samples per question from N=1 to N=32.

\subsection{Main Results}
\label{sec:exp_main}
\textbf{Weighting function calibration significantly boosts TTS efficiency.} We evaluate the effectiveness of the proposed weighting function calibration methods by performing calibration for each LLM and PRM pair before scaling test-time compute. As shown in Figure \ref{fig:Scaling}, calibrating before scaling significantly boosts the efficiency. In particular, on the MATH and MATH500 datasets, the logit-based calibration method surpasses the performance of vanilla weighting voting methods \textit{with approximately 37.1\% and 21.3\% of {\color{black}test-time} compute on average across 35 LLM-PRM pairs}.

\begin{figure}[t]
    \centering
    % Placeholder for performance comparison and weight function plots
    \begin{subfigure}[b]{0.45\textwidth}
        \centering
        % Placeholder for performance plot
        \includegraphics[width=\linewidth]{./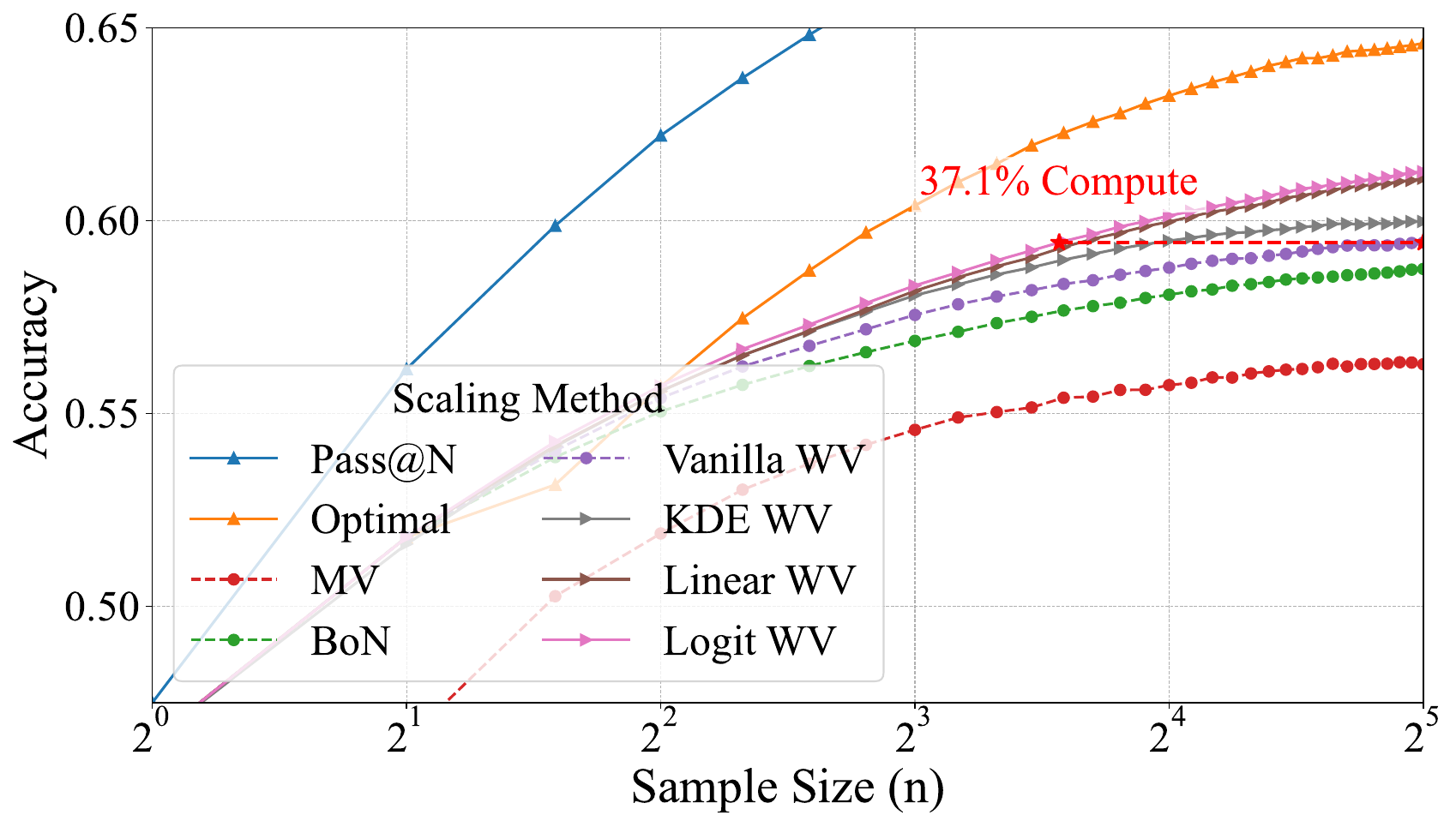}
        % \vspace{3cm}
        % \caption{Performance on MATH dataset.}
        % \label{fig:MATH_scaling}
    \end{subfigure}
    % \hfill
    \begin{subfigure}[b]{0.45\textwidth}
        \centering
        \includegraphics[width=\linewidth]{./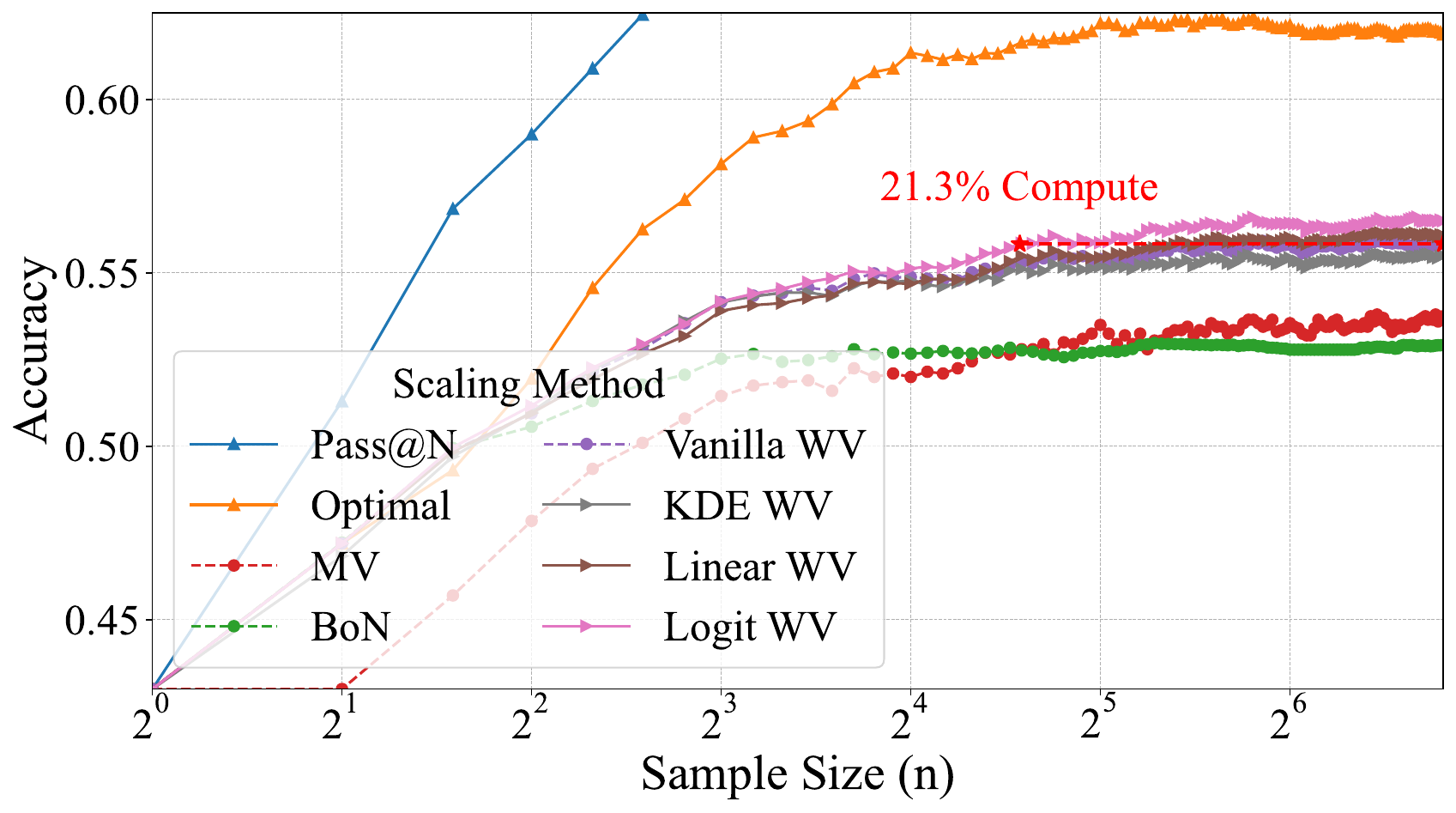}
        % \vspace{3cm}
        % \caption{Performance of MATH500 dataset.}
        % \label{fig:MATH500_scaling}
    \end{subfigure}
    \caption{The performance of various scaling methods averaged across all LLM and PRM combinations. The computation efficiency improvement of the Logit WV compared to the best-performing baseline, Vanilla WV, is marked in red. \textbf{Left:} On the MATH dataset. \textbf{Right:} On the MATH500 dataset.}
    \label{fig:Scaling}
\end{figure}

% \textbf{Detailed performance}
For detailed results on LLM-PRM pairs, we show the performance of various scaling methods in Table \ref{tab:accuracy_n_32}. We can see that calibration methods consistently outperform baseline scaling methods across LLMs and PRMs. Generally, Logit Weighted Voting performs the best in most cases. In particular, on the Llama3.1-Mistral-8B PRM, Logit WV outperforms the best-performing baseline method, Vanilla WV, by 3 points of accuracy (61.2 v.s. 58.2) on average across 5 LLMs, \textit{which would otherwise take an exponential amount of test-time compute to achieve}. This strongly supports our claim of calibrating the weighting function before expensive TTS. Please refer to Appendix \ref{app:add_exp} for more detailed results.

\begin{table}[t]
\caption{Accuracy of TTS methods at sample size 32 {\color{black}on MATH}. The best method for each case is in bold.}
\label{tab:accuracy_n_32}
\begin{center}
\begin{adjustbox}{width=\textwidth}
\begin{tabular}{l l c c c c c c}
\multicolumn{1}{c}{\bf PRM} & \multicolumn{1}{c}{\bf Method} & \multicolumn{1}{c}{\bf Mistral-7B} & \multicolumn{1}{c}{\bf Qwen2.5-1.5B} & \multicolumn{1}{c}{\bf Qwen2.5-7B} & \multicolumn{1}{c}{\bf DeepSeek-1.5B} & \multicolumn{1}{c}{\bf DeepSeek-7B} & \multicolumn{1}{c}{\bf Average} \\ \hline
Qwen-
PRM800K-
7B & Optimal & 55.6 $\pm$ 0.0 & 64.5 $\pm$ 0.0 & 71.7 $\pm$ 0.0 & 59.3 $\pm$ 0.0 & 57.2 $\pm$ 0.0 & 61.7 $\pm$ 0.0 \\
\cmidrule(lr){2-8}
& BoN & 49.5 $\pm$ 0.0 & 57.3 $\pm$ 0.0 & 65.3 $\pm$ 0.0 & 45.0 $\pm$ 0.0 & 46.1 $\pm$ 0.0 & 52.7 $\pm$ 0.0 \\
& MV & 51.3 $\pm$ 0.0 & 60.6 $\pm$ 0.0 & 67.8 $\pm$ 0.0 & \textbf{54.1 $\pm$ 0.0} & 51.9 $\pm$ 0.0 & 57.1 $\pm$ 0.0 \\
& Vanilla WV & 52.5 $\pm$ 0.0 & 61.4 $\pm$ 0.0 & 67.7 $\pm$ 0.0 & 53.7 $\pm$ 0.0 & 51.7 $\pm$ 0.0 & 57.4 $\pm$ 0.0 \\
\cmidrule(lr){2-8}
& \cellcolor{gray!20} KDE WV & \cellcolor{gray!20} 52.3 $\pm$ 0.2 & \cellcolor{gray!20} 61.2 $\pm$ 0.1 & \cellcolor{gray!20} \textbf{68.0 $\pm$ 0.1} & \cellcolor{gray!20} 54.0 $\pm$ 0.1 & \cellcolor{gray!20} \textbf{52.0 $\pm$ 0.1} & \cellcolor{gray!20} 57.5 $\pm$ 0.0 \\
& \cellcolor{gray!20} Linear WV & \cellcolor{gray!20} 53.0 $\pm$ 0.0 & \cellcolor{gray!20} 61.5 $\pm$ 0.1 & \cellcolor{gray!20} 67.9 $\pm$ 0.0 & \cellcolor{gray!20} 53.8 $\pm$ 0.0 & \cellcolor{gray!20} 51.8 $\pm$ 0.0 & \cellcolor{gray!20} 57.6 $\pm$ 0.0 \\
& \cellcolor{gray!20} Logit WV & \cellcolor{gray!20} \textbf{53.0 $\pm$ 0.0} & \cellcolor{gray!20} \textbf{61.5 $\pm$ 0.0} & \cellcolor{gray!20} 67.7 $\pm$ 0.1 & \cellcolor{gray!20} 53.9 $\pm$ 0.1 & \cellcolor{gray!20} 51.9 $\pm$ 0.0 & \cellcolor{gray!20} \textbf{57.6 $\pm$ 0.0} \\
\hline
Qwen-
PRM-
7B & Optimal & 58.5 $\pm$ 0.0 & 67.9 $\pm$ 0.0 & 75.4 $\pm$ 0.0 & 67.6 $\pm$ 0.0 & 63.1 $\pm$ 0.0 & 66.5 $\pm$ 0.0 \\
\cmidrule(lr){2-8}
& BoN & 53.7 $\pm$ 0.0 & 62.6 $\pm$ 0.0 & 70.4 $\pm$ 0.0 & 62.5 $\pm$ 0.0 & 59.8 $\pm$ 0.0 & 61.8 $\pm$ 0.0 \\
& MV & 51.3 $\pm$ 0.0 & 60.6 $\pm$ 0.0 & 67.8 $\pm$ 0.0 & 54.1 $\pm$ 0.0 & 51.9 $\pm$ 0.0 & 57.1 $\pm$ 0.0 \\
& Vanilla WV & {54.3 $\pm$ 0.0} & 62.9 $\pm$ 0.0 & 70.2 $\pm$ 0.0 & 59.9 $\pm$ 0.0 & 56.8 $\pm$ 0.0 & 60.8 $\pm$ 0.0 \\
\cmidrule(lr){2-8}
& \cellcolor{gray!20} KDE WV & \cellcolor{gray!20} 52.8 $\pm$ 0.1 & \cellcolor{gray!20} 61.9 $\pm$ 0.1 & \cellcolor{gray!20} 68.8 $\pm$ 0.1 & \cellcolor{gray!20} 59.1 $\pm$ 0.4 & \cellcolor{gray!20} 58.2 $\pm$ 1.2 & \cellcolor{gray!20} 60.2 $\pm$ 0.2 \\
& \cellcolor{gray!20} Linear WV & \cellcolor{gray!20} 54.1 $\pm$ 0.0 & \cellcolor{gray!20} 63.0 $\pm$ 0.1 & \cellcolor{gray!20} 70.3 $\pm$ 0.0 & \cellcolor{gray!20} 64.7 $\pm$ 0.1 & \cellcolor{gray!20} 62.6 $\pm$ 0.0 & \cellcolor{gray!20} 62.9 $\pm$ 0.0 \\
& \cellcolor{gray!20} Logit WV & \cellcolor{gray!20} \textbf{54.3} $\pm$ 0.0 & \cellcolor{gray!20} \textbf{63.5 $\pm$ 0.1} & \cellcolor{gray!20} \textbf{70.5 $\pm$ 0.2} & \cellcolor{gray!20} \textbf{65.1 $\pm$ 0.0} & \cellcolor{gray!20} \textbf{62.9 $\pm$ 0.0} & \cellcolor{gray!20} \textbf{63.3 $\pm$ 0.0} \\
\hline
Llama3.1-
Mistral-
8B & Optimal & 58.7 $\pm$ 0.0 & 65.8 $\pm$ 0.0 & 74.4 $\pm$ 0.0 & 65.7 $\pm$ 0.0 & 61.9 $\pm$ 0.0 & 65.3 $\pm$ 0.0 \\
\cmidrule(lr){2-8}
& BoN & 54.2 $\pm$ 0.0 & 58.0 $\pm$ 0.0 & 68.7 $\pm$ 0.0 & 56.9 $\pm$ 0.0 & 53.5 $\pm$ 0.0 & 58.3 $\pm$ 0.0 \\
& MV & 51.3 $\pm$ 0.0 & 60.6 $\pm$ 0.0 & 67.8 $\pm$ 0.0 & 54.1 $\pm$ 0.0 & 51.9 $\pm$ 0.0 & 57.1 $\pm$ 0.0 \\
& Vanilla WV & 53.9 $\pm$ 0.0 & 62.2 $\pm$ 0.0 & 69.4 $\pm$ 0.0 & 56.9 $\pm$ 0.0 & 54.8 $\pm$ 0.0 & 59.4 $\pm$ 0.0 \\
\cmidrule(lr){2-8}
& \cellcolor{gray!20} KDE WV & \cellcolor{gray!20} 53.9 $\pm$ 0.3 & \cellcolor{gray!20} 62.0 $\pm$ 0.1 & \cellcolor{gray!20} 68.8 $\pm$ 0.1 & \cellcolor{gray!20} 58.0 $\pm$ 0.1 & \cellcolor{gray!20} 58.1 $\pm$ 0.6 & \cellcolor{gray!20} 60.2 $\pm$ 0.2 \\
& \cellcolor{gray!20} Linear WV & \cellcolor{gray!20} 54.7 $\pm$ 0.0 & \cellcolor{gray!20} 62.2 $\pm$ 0.5 & \cellcolor{gray!20} 70.1 $\pm$ 0.4 & \cellcolor{gray!20} 62.3 $\pm$ 0.0 & \cellcolor{gray!20} 60.2 $\pm$ 0.0 & \cellcolor{gray!20} 61.9 $\pm$ 0.2 \\
& \cellcolor{gray!20} Logit WV & \cellcolor{gray!20} \textbf{55.6 $\pm$ 0.1} & \cellcolor{gray!20} \textbf{62.9 $\pm$ 0.0} & \cellcolor{gray!20} \textbf{70.7 $\pm$ 0.3} & \cellcolor{gray!20} \textbf{62.8 $\pm$ 0.1} & \cellcolor{gray!20} \textbf{60.7 $\pm$ 0.1} & \cellcolor{gray!20} \textbf{62.6 $\pm$ 0.1} \\
\hline
Llama3.1-
DS-
8B & Optimal & 58.1 $\pm$ 0.0 & 65.9 $\pm$ 0.0 & 73.8 $\pm$ 0.0 & 65.6 $\pm$ 0.0 & 61.6 $\pm$ 0.0 & 65.0 $\pm$ 0.0 \\
\cmidrule(lr){2-8}
& BoN & 51.3 $\pm$ 0.0 & 59.5 $\pm$ 0.0 & 68.8 $\pm$ 0.0 & 59.0 $\pm$ 0.0 & 55.9 $\pm$ 0.0 & 58.9 $\pm$ 0.0 \\
& MV & 51.3 $\pm$ 0.0 & 60.6 $\pm$ 0.0 & 67.8 $\pm$ 0.0 & 54.1 $\pm$ 0.0 & 51.9 $\pm$ 0.0 & 57.1 $\pm$ 0.0 \\
& Vanilla WV & 54.3 $\pm$ 0.0 & \textbf{61.7 $\pm$ 0.0} & 69.6 $\pm$ 0.0 & 57.1 $\pm$ 0.0 & 55.7 $\pm$ 0.0 & 59.7 $\pm$ 0.0 \\
\cmidrule(lr){2-8}
& \cellcolor{gray!20} KDE WV & \cellcolor{gray!20} 53.3 $\pm$ 0.6 & \cellcolor{gray!20} 61.4 $\pm$ 0.1 & \cellcolor{gray!20} 68.7 $\pm$ 0.0 & \cellcolor{gray!20} 58.3 $\pm$ 0.2 & \cellcolor{gray!20} 56.9 $\pm$ 0.4 & \cellcolor{gray!20} 59.7 $\pm$ 0.2 \\
& \cellcolor{gray!20} Linear WV & \cellcolor{gray!20} 54.7 $\pm$ 0.1 & \cellcolor{gray!20} 61.5 $\pm$ 0.2 & \cellcolor{gray!20} 69.9 $\pm$ 0.1 & \cellcolor{gray!20} 60.7 $\pm$ 0.6 & \cellcolor{gray!20} 58.9 $\pm$ 0.1 & \cellcolor{gray!20} 61.1 $\pm$ 0.1 \\
& \cellcolor{gray!20} Logit WV & \cellcolor{gray!20} \textbf{54.9 $\pm$ 0.0} & \cellcolor{gray!20} 61.6 $\pm$ 0.0 & \cellcolor{gray!20} \textbf{69.9 $\pm$ 0.0} & \cellcolor{gray!20} \textbf{61.9 $\pm$ 0.1} & \cellcolor{gray!20} \textbf{59.2 $\pm$ 0.3} & \cellcolor{gray!20} \textbf{61.5 $\pm$ 0.0} \\
\hline
Skywork-
PRM-
1.5B & Optimal & 57.7 $\pm$ 0.0 & 73.1 $\pm$ 0.0 & 75.1 $\pm$ 0.0 & 67.3 $\pm$ 0.0 & 68.9 $\pm$ 0.0 & 68.4 $\pm$ 0.0 \\
\cmidrule(lr){2-8}
& BoN & 54.2 $\pm$ 0.0 & 70.3 $\pm$ 0.0 & \textbf{72.5 $\pm$ 0.0} & 63.5 $\pm$ 0.0 & \textbf{66.1 $\pm$ 0.0} & 65.3 $\pm$ 0.0 \\
& MV & 51.3 $\pm$ 0.0 & 65.8 $\pm$ 0.0 & 67.8 $\pm$ 0.0 & 54.1 $\pm$ 0.0 & 57.1 $\pm$ 0.0 & 59.2 $\pm$ 0.0 \\
& Vanilla WV & 54.9 $\pm$ 0.0 & 69.4 $\pm$ 0.0 & 71.6 $\pm$ 0.0 & 60.3 $\pm$ 0.0 & 61.5 $\pm$ 0.0 & 63.5 $\pm$ 0.0 \\
\cmidrule(lr){2-8}
& \cellcolor{gray!20} KDE WV & \cellcolor{gray!20} 53.8 $\pm$ 0.1 & \cellcolor{gray!20} 68.4 $\pm$ 0.4 & \cellcolor{gray!20} 70.6 $\pm$ 0.1 & \cellcolor{gray!20} 60.6 $\pm$ 1.0 & \cellcolor{gray!20} 62.8 $\pm$ 1.3 & \cellcolor{gray!20} 63.2 $\pm$ 0.5 \\
& \cellcolor{gray!20} Linear WV & \cellcolor{gray!20} \textbf{55.8 $\pm$ 0.0} & \cellcolor{gray!20} \textbf{70.3 $\pm$ 0.1} & \cellcolor{gray!20} 72.2 $\pm$ 0.3 & \cellcolor{gray!20} \textbf{66.2 $\pm$ 0.1} & \cellcolor{gray!20} 65.5 $\pm$ 0.1 & \cellcolor{gray!20} \textbf{66.0 $\pm$ 0.1} \\
& \cellcolor{gray!20} Logit WV & \cellcolor{gray!20} 55.6 $\pm$ 0.0 & \cellcolor{gray!20} 70.2 $\pm$ 0.0 & \cellcolor{gray!20} 72.0 $\pm$ 0.2 & \cellcolor{gray!20} 66.0 $\pm$ 0.0 & \cellcolor{gray!20} 65.5 $\pm$ 0.1 & \cellcolor{gray!20} 65.9 $\pm$ 0.0 \\
\hline
\end{tabular}
\end{adjustbox}
\end{center}
\end{table}

{\color{black}

% \subsection{Effectiveness on domains other than math}

\textbf{Consistently effective on domains other than math.} To demonstrate broader applicability, we further experimented on a wide range of tasks beyond math. In the MMLU-Pro-CoT-Eval dataset, there are approximately 150 questions in each task, among which 50 are used as the calibration set and the rest as the test set. Llama-3.1-8B-Instruct and VersaPRM are the LLM-PRM pair. As seen in Table \ref{tab:accuracy_n_16_llm_Llama-3.1-8B-Instruct}, the proposed methods are consistently effective on a wide range of tasks beyond the math domain.

\begin{table}[t]
\caption{Accuracy of aggregation methods at sample size n=16 for Llama-3.1-8B-Instruct and VersaPRM on the MMLU-Pro-CoT-Eval dataset.}
\label{tab:accuracy_n_16_llm_Llama-3.1-8B-Instruct}
\begin{center}
\begin{adjustbox}{width=\textwidth}
\begin{tabular}{l l c c c c c c c c c c c c}
\multicolumn{1}{c}{\bf PRM} & \multicolumn{1}{c}{\bf Method} & \multicolumn{1}{c}{\bf Health} & \multicolumn{1}{c}{\bf CS} & \multicolumn{1}{c}{\bf Econ} & \multicolumn{1}{c}{\bf Chem} & \multicolumn{1}{c}{\bf Other} & \multicolumn{1}{c}{\bf Physics} & \multicolumn{1}{c}{\bf Law} & \multicolumn{1}{c}{\bf Eng} & \multicolumn{1}{c}{\bf History} & \multicolumn{1}{c}{\bf Math} & \multicolumn{1}{c}{\bf Phil} & \multicolumn{1}{c}{\bf Average} \\ \hline
VersaPRM & Optimal & 71.8 & 78.0 & 76.0 & 85.0 & 79.0 & 84.0 & 65.3 & 78.0 & 62.0 & 85.0 & 70.7 & 75.9 \\
\cmidrule(lr){2-14}
& BoN & 61.2 & 57.0 & 62.0 & 64.0 & 61.0 & 62.0 & {45.3} & 55.0 & 46.0 & 63.0 & 50.5 & 57.0 \\
& MV & 62.4 & 55.0 & 61.0 & 64.0 & 57.0 & 62.0 & 35.8 & 56.0 & 44.0 & 61.0 & 42.4 & 54.6 \\
& Vanilla WV & 63.5 & 57.0 & 62.0 & {66.0} & 59.0 & 64.0 & 43.2 & 60.0 & 46.0 & 65.0 & 48.5 & 57.7 \\
\cmidrule(lr){2-14}
& \cellcolor{gray!20} KDE WV & \cellcolor{gray!20} \textbf{64.7} & \cellcolor{gray!20} 57.0 & \cellcolor{gray!20} 63.0 & \cellcolor{gray!20} \textbf{66.0} & \cellcolor{gray!20} 58.0 & \cellcolor{gray!20} 64.0 & \cellcolor{gray!20} 42.1 & \cellcolor{gray!20} 58.0 & \cellcolor{gray!20} 45.0 & \cellcolor{gray!20} 65.0 & \cellcolor{gray!20} 46.5 & \cellcolor{gray!20} 57.2 \\
& \cellcolor{gray!20} Linear WV & \cellcolor{gray!20} 63.5 & \cellcolor{gray!20} 57.0 & \cellcolor{gray!20} \textbf{64.0} & \cellcolor{gray!20} 66.0 & \cellcolor{gray!20} \textbf{62.0} & \cellcolor{gray!20} \textbf{67.0} & \cellcolor{gray!20} 44.2 & \cellcolor{gray!20} \textbf{63.0} & \cellcolor{gray!20} \textbf{47.0} & \cellcolor{gray!20} \textbf{66.0} & \cellcolor{gray!20} 46.5 & \cellcolor{gray!20} 58.7 \\
& \cellcolor{gray!20} Logit WV & \cellcolor{gray!20} 62.4 & \cellcolor{gray!20} \textbf{58.0} & \cellcolor{gray!20} 64.0 & \cellcolor{gray!20} 65.0 & \cellcolor{gray!20} 60.0 & \cellcolor{gray!20} 66.0 & \cellcolor{gray!20} \textbf{45.3} & \cellcolor{gray!20} 63.0 & \cellcolor{gray!20} 44.0 & \cellcolor{gray!20} 66.0 & \cellcolor{gray!20} \textbf{53.5} & \cellcolor{gray!20} \textbf{58.8} \\
\hline
\end{tabular}
\end{adjustbox}
\end{center}
\end{table}

}

\subsection{Empirical analysis}
\label{sec:exp_analysis}
\textbf{Are negative weights necessary in utilizing the PRM signals?} To further verify our insight that negative weights are necessary for better utilization of the PRM signals, we show the grid search result on the offset parameter $b$ of Logit WV and Linear WV, where, in both cases, its value suggests the zero-crossing point, assigning negative weights to responses whose PRM score is lower than $b$. As shown in Figure \ref{fig:b_search}, for both weighting functions, the optimal offset $b$ is consistently larger than zero across all LLMs, proving the necessity of negative weights in efficient TTS. Furthermore, the zero-crossing points are different for each LLM, but are generally consistent for the same LLM using different weighting functions (Linear and Logit), demonstrating the PRM's varying capability in distinguishing positive and negative responses from different LLMs. This further supports our claim to take the unique interactions between the models into account, i.e., calibration, for efficient TTS.

\begin{figure}[t]
    \centering
    % Placeholder for performance comparison and weight function plots
    \begin{subfigure}[b]{0.32\textwidth}
        \centering
        % Placeholder for performance plot
        \includegraphics[width=\linewidth]{./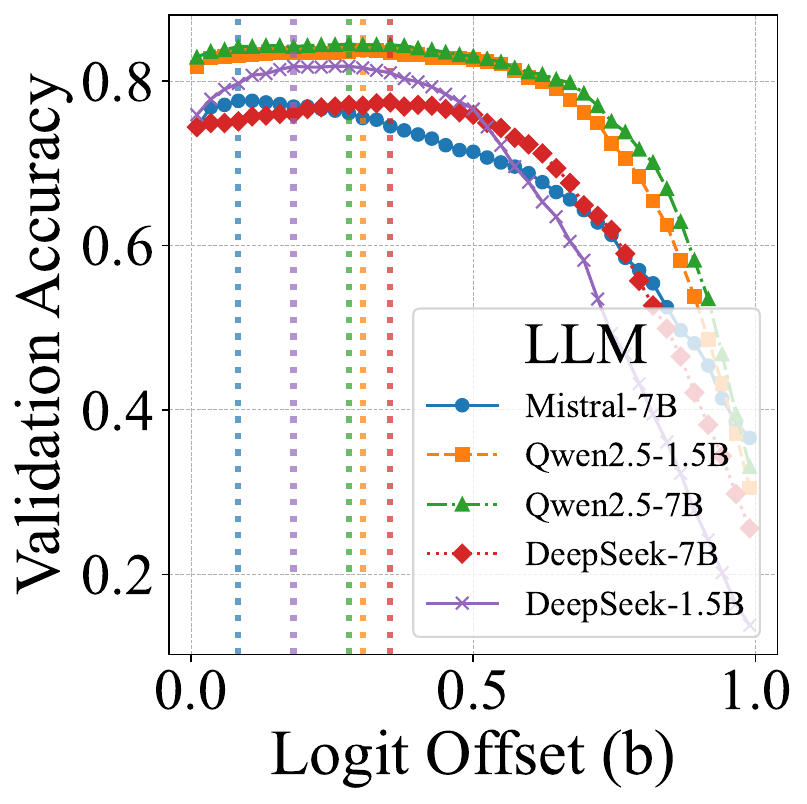}
        \vspace{-3mm}
        % \caption{Grid search of $b$ for Logit WV.}
        \label{fig:MATH_scaling}
    \end{subfigure}
    % \hfill
    \begin{subfigure}[b]{0.32\textwidth}
        \centering
        \includegraphics[width=\linewidth]{./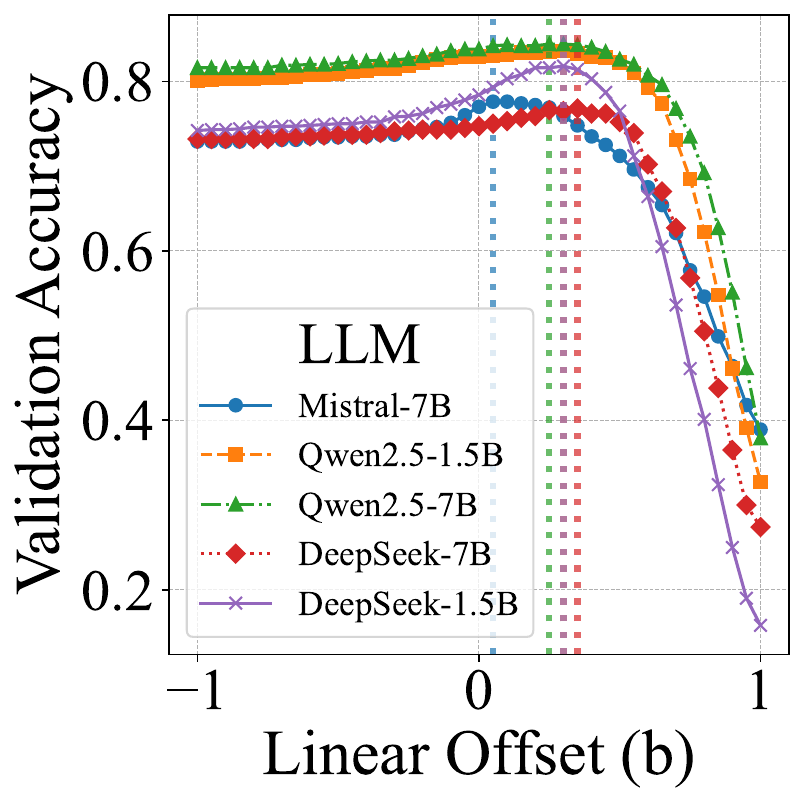}
        \vspace{-3mm}
        % \caption{Grid search of $b$ for Linear WV.}
        \label{fig:MATH500_scaling}
    \end{subfigure}
    \begin{subfigure}[b]{0.32\textwidth}
        \centering
        \includegraphics[width=\linewidth]{./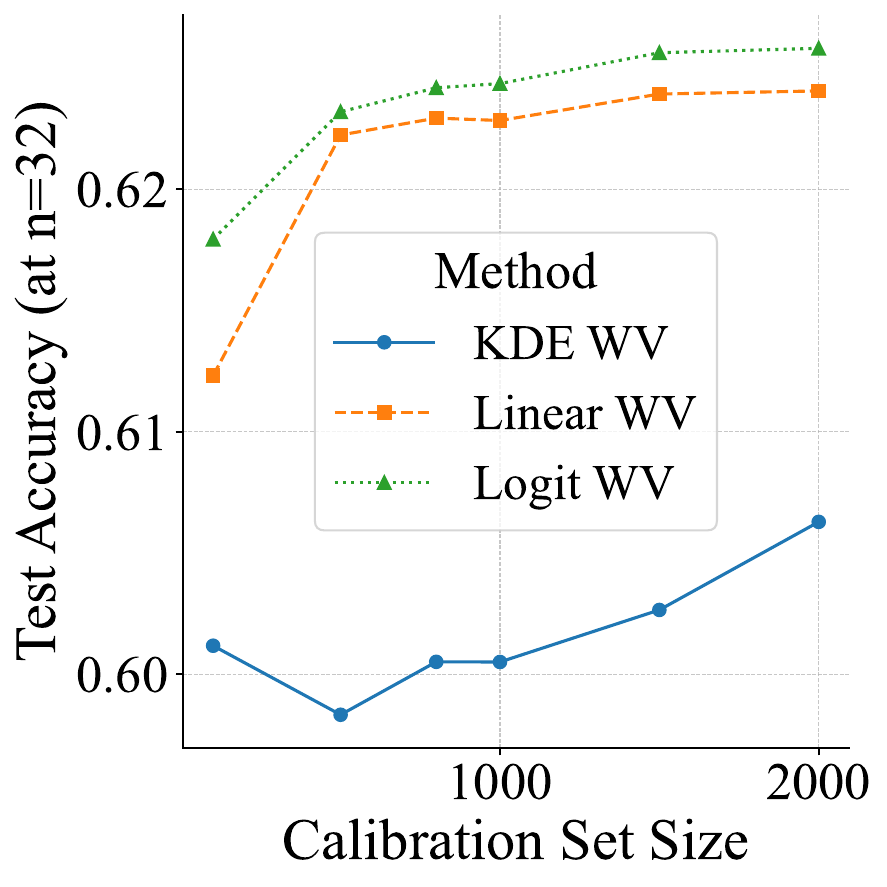}
        \vspace{-3mm}
        % \caption{Grid search of $b$ for Linear WV.}
        \label{fig:MATH500_scaling}
    \end{subfigure}
    \caption{\textbf{Left and Middle:} The grid search result of the offset parameter $b$ for both Logit WV and Linear WV, where the optimal value is marked with vertical lines. The consistently positive optimal value across LLMs demonstrates the necessity of negative weights. \textbf{Right:} The performance of the calibration methods when we scale the calibration set size. The performance can be further improved with larger calibration sets.}
    \label{fig:b_search}
\end{figure}

\textbf{What's the remaining gap and challenges towards the optimal weighting function?} To answer this question, we compare our dataset-wise estimation of the PRM score weighting function $\log(\frac{f_1(p)}{f_0(p)})$ with the optimal per-question estimation on several questions. As shown in the left subplot of Figure \ref{fig:est_gap}, while our estimation captures the dataset-wise PRM score weighting function, the optimal weighting function for each individual question varies largely, suggesting a global scoring function is still suboptimal. 
We also examine how accurate our estimation of the LLM reliability term $q_M$ is with Mean Absolute Error. As shown in the right subplot of Figure \ref{fig:est_gap}, while compared to a fixed global LLM reliability, using calibrated PRM scores to estimate this term effectively reduces the error, the error for most PRMs is still larger than 0.2, which is far from negligible.
This also explains the relatively low performance of KDE WV compared to the parametric counterparts.
In conclusion, accurately estimating either the PRM or the LLM part of the weight requires nuanced estimation for individual questions, explaining why the non-parametric KDE estimation underperforms the parametric ones. We find such per-question estimation inherently difficult in our attempts to learn a meta model to predict per-question weighting functions, which struggles to fit and generalize.

\begin{figure}[t]
    \centering
    % Placeholder for performance comparison and weight function plots
    \begin{subfigure}[b]{0.49\textwidth}
        \centering
        % Placeholder for performance plot
        \includegraphics[width=\linewidth]{./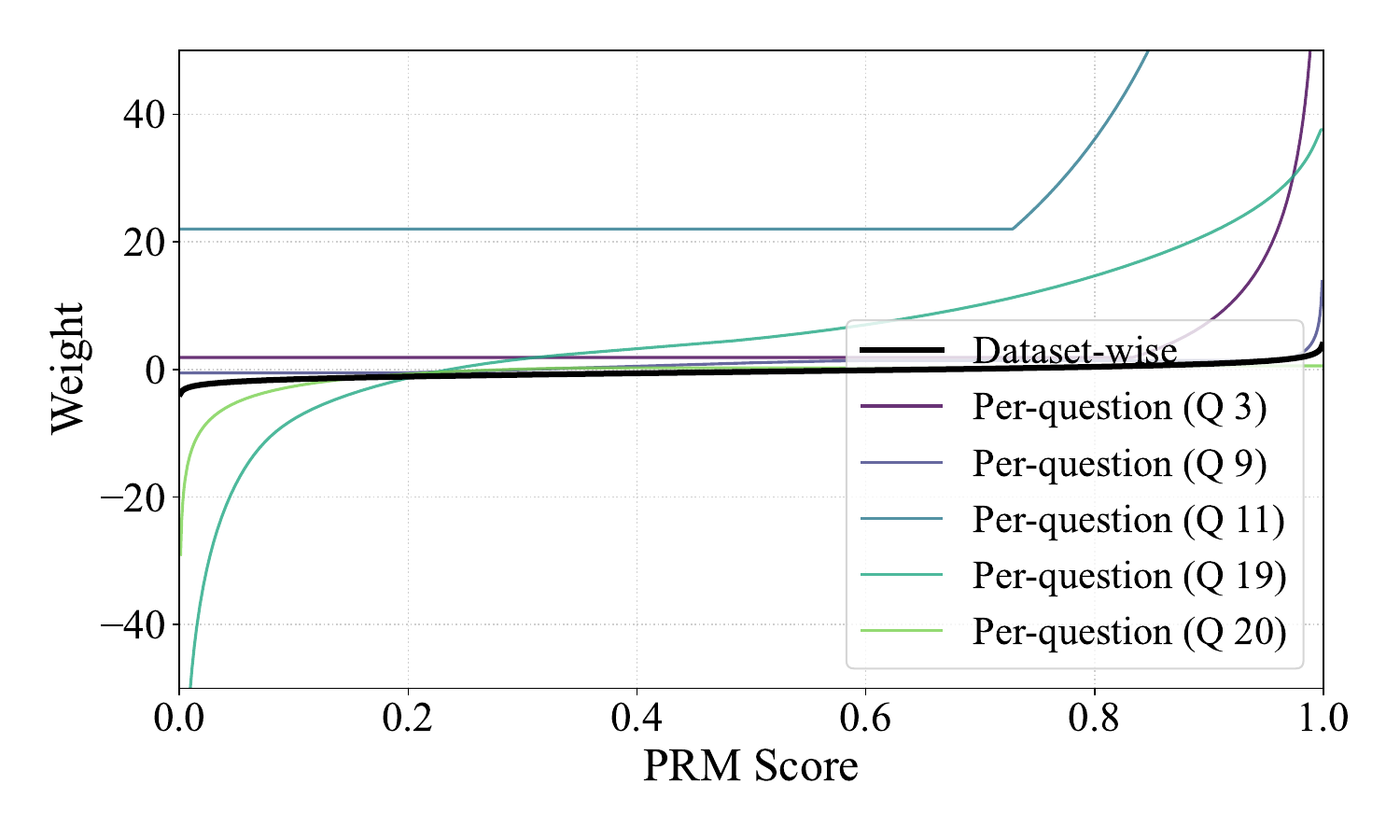}
        \vspace{-6mm}
    \end{subfigure}
    % \hfill
    \begin{subfigure}[b]{0.49\textwidth}
        \centering
        \includegraphics[width=\linewidth]{./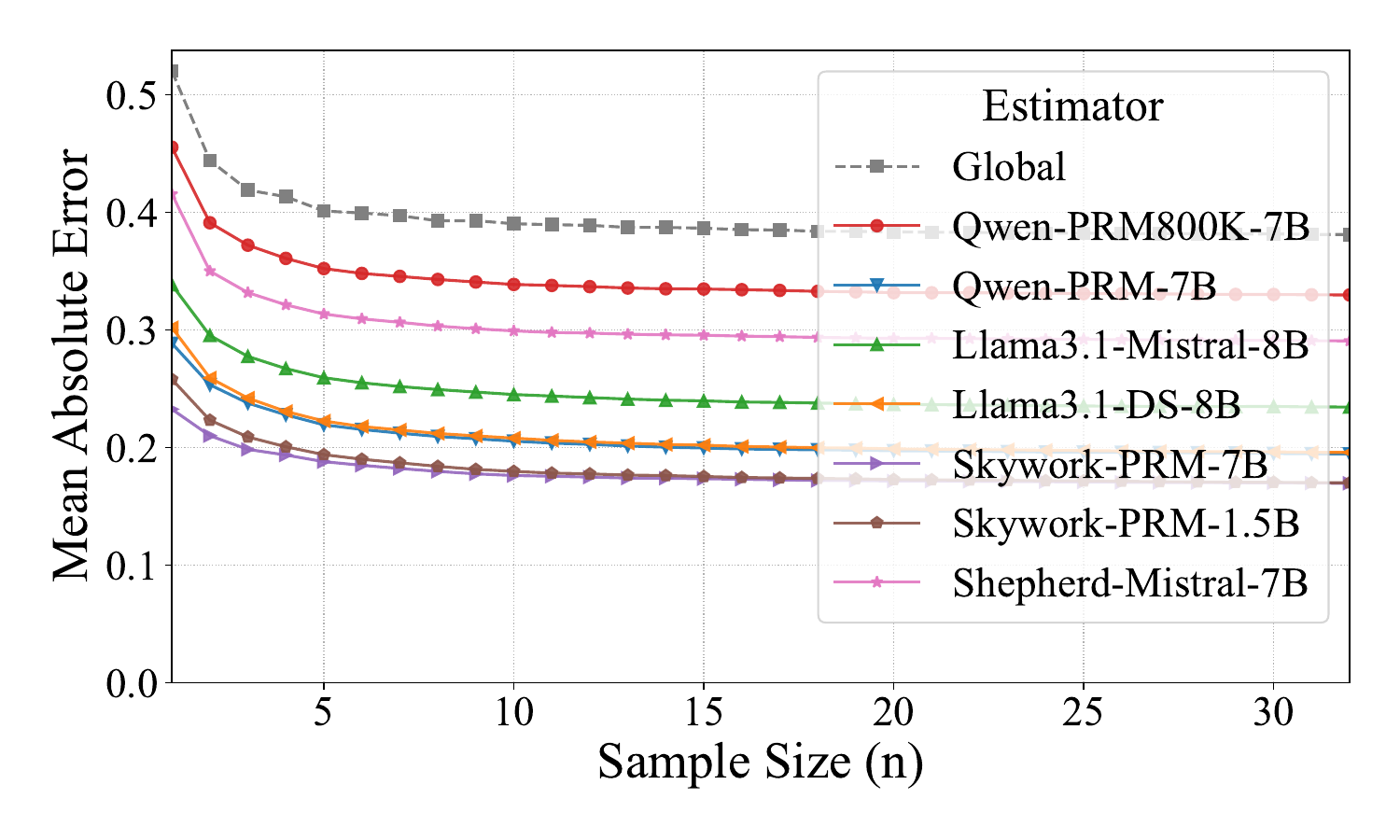}
        \vspace{-6mm}
    \end{subfigure}
    \caption{\textbf{Left:}The comparison of the dataset-wise estimated PRM score weighting function and the per-question estimated optimal PRM score weighting function. A large variance among questions can be seen. \textbf{Right:} The mean absolute error of estimated $\hat{q}_M$ compared to the true $q_M$.}
    \label{fig:est_gap}
\end{figure}

% \textbf{Ablations on distribution estimation methods.}
\textbf{How does calibration set size affect the performance?} We rearrange the split of the MATH dataset to reserve 5k questions as the test, and the rest as the pool of calibration data. As shown in the right subplot of Figure \ref{fig:b_search}, while the validation set used in the main experiments is relative small, the performances of the calibration methods can be further enhanced if we scale the calibration set size to calibrate the weighting function better.

% \subsection{Effectiveness of Out-of-Domain calibration sets}

{\color{black}
\textbf{Are calibration results transferable to out-of-domain test sets?} To further examine the effectiveness of calibration sets that are not in the same domain as the test set, we calibrate an aggregator on each of the tasks, and report its average accuracy when tested on the other tasks in the following table, where each column corresponds to the calibrator trained on that task. As we can see in Table \ref{tab:transfer}, despite being calibrated on various “Out-of-domain” calibration sets, the proposed methods consistently outperform the baselines, showing strong transferability.

\begin{table}[t]
\caption{\color{black}Average accuracy of out-of-domain test tasks of Aggregation Methods at n=16 for LLM 'Llama-3.1-8B-Instruct'}
\label{tab:transfer}
\begin{center}
\begin{adjustbox}{width=1.0\textwidth}
\begin{tabular}{l c c c c c c c c c c c c}
\multicolumn{1}{c}{\bf Method} & \multicolumn{1}{c}{\bf Chem} & \multicolumn{1}{c}{\bf CS} & \multicolumn{1}{c}{\bf Econ} & \multicolumn{1}{c}{\bf Eng} & \multicolumn{1}{c}{\bf Health} & \multicolumn{1}{c}{\bf History} & \multicolumn{1}{c}{\bf Law} & \multicolumn{1}{c}{\bf Math} & \multicolumn{1}{c}{\bf Other} & \multicolumn{1}{c}{\bf Phil} & \multicolumn{1}{c}{\bf Physics} & \multicolumn{1}{c}{\bf Average} \\ \hline
Optimal & 74.8 & 75.2 & 75.5 & 75.3 & 75.7 & 76.5 & 76.8 & 74.8 & 75.3 & 76.0 & 74.8 & 75.5 \\
\cmidrule(lr){1-13}
BoN & 56.4 & 56.8 & 56.5 & 57.2 & 56.8 & 57.7 & 58.2 & 56.5 & 56.6 & 58.0 & 56.4 & 57.0 \\
MV & 53.7 & 54.3 & 53.9 & 54.3 & 53.3 & 55.1 & 56.3 & 53.8 & 54.2 & 55.5 & 53.5 & 54.4 \\
Vanilla WV & 57.0 & 57.5 & 57.3 & 57.5 & 56.8 & 58.4 & 59.1 & 57.0 & 57.5 & 58.7 & 56.8 & 57.6 \\
\cmidrule(lr){1-13}
\cellcolor{gray!20} KDE WV & \cellcolor{gray!20} 57.5 & \cellcolor{gray!20} 56.3 & \cellcolor{gray!20} 55.0 & \cellcolor{gray!20} 57.7 & \cellcolor{gray!20} 55.0 & \cellcolor{gray!20} 57.5 & \cellcolor{gray!20} 59.8 & \cellcolor{gray!20} 56.9 & \cellcolor{gray!20} 56.3 & \cellcolor{gray!20} 57.8 & \cellcolor{gray!20} 57.4 & \cellcolor{gray!20} 57.0 \\
\cellcolor{gray!20} Linear WV & \cellcolor{gray!20} \textbf{57.7} & \cellcolor{gray!20} \textbf{57.9} & \cellcolor{gray!20} \textbf{58.1} & \cellcolor{gray!20} 57.7 & \cellcolor{gray!20} 55.2 & \cellcolor{gray!20} \textbf{59.5} & \cellcolor{gray!20} 58.9 & \cellcolor{gray!20} 57.1 & \cellcolor{gray!20} \textbf{58.0} & \cellcolor{gray!20} 58.4 & \cellcolor{gray!20} 56.7 & \cellcolor{gray!20} 57.8 \\
\cellcolor{gray!20} Logit WV & \cellcolor{gray!20} 57.7 & \cellcolor{gray!20} 57.8 & \cellcolor{gray!20} 57.9 & \cellcolor{gray!20} \textbf{58.1} & \cellcolor{gray!20} \textbf{57.8} & \cellcolor{gray!20} 59.4 & \cellcolor{gray!20} \textbf{59.9} & \cellcolor{gray!20} \textbf{57.5} & \cellcolor{gray!20} 57.9 & \cellcolor{gray!20} \textbf{58.8} & \cellcolor{gray!20} \textbf{57.7} & \cellcolor{gray!20} \textbf{58.2} \\
\hline
\textbf{Average} & 56.7 & 56.7 & 56.5 & 57.1 & 55.8 & 57.9 & 58.7 & 56.5 & 56.7 & 57.9 & 56.4 & 57.0 \\
\hline
\end{tabular}
\end{adjustbox}
\end{center}
\end{table}
}

\section{Conclusion}
We address the suboptimal use of Process Reward Models (PRMs) in Test-Time Scaling (TTS). Through a theoretical MAP framework, we show that the optimal aggregation strategy is a weighted majority vote combining signals from both the LLM and PRM. Empirically, we find these optimal weights are model-dependent and, critically, assign large negative values to penalize low-quality responses—a powerful signal neglected by standard methods. We propose simple calibration methods to learn these functions. Our calibrated weighted voting boosts TTS efficiency, achieving superior accuracy over baselines like Best-of-N with approximately 37.1\% and 21.3\% computational cost. This work demonstrates that intelligent aggregation is a more efficient path to performance gains than simply scaling test-time compute.

\section*{Acknowledgment}
This work was partially supported by the National Artificial Intelligence Research Resource (NAIRR) Pilot under awards NAIRR250400 and NAIRR240283, and Standing Up to POTS. 

\bibliography{iclr2026_conference}
\bibliographystyle{iclr2026_conference}
% =================================================================================
% --- APPENDIX SECTION ---
% =================================================================================
\newpage
\appendix

\section{Proofs}
\label{app:proofs}

\subsection{Derivation of Theorem \ref{thm:scoring_vote}}
\label{app:proof_scoring_vote}
Our objective is to find the answer $\hat{\alpha}$ that maximizes the posterior probability $P(\alpha_k | \mathcal{G}, \mathcal{P}, M, V)$. Assuming a uniform prior over answers, this is equivalent to maximizing the log-likelihood, $\text{LL}(\alpha_k)$. From the main text, the log-likelihood under Assumption \ref{assump:indep} is:
\begin{equation}
    \text{LL}(\alpha_k) = \sum_{i=1}^{L} \log P(p_i | g_i, \alpha_k, V) + \sum_{i=1}^{L} \log P(g_i | \alpha_k, M)
\end{equation}
Let $c_i \in \{0, 1\}$ be a binary variable indicating whether generation $g_i$ is correct ($c_i=1$) or incorrect ($c_i=0$). Under the hypothesis that the true answer is $\alpha_k$, the correctness of $g_i$ is determined by its answer $s_i$. Specifically, $c_i=1$ if $s_i = \alpha_k$, and $c_i=0$ if $s_i \neq \alpha_k$.

We can now analyze the two components of the log-likelihood separately.

\textbf{Part 1: The PRM Signal Term}

The first sum can be split based on whether a generation's answer $s_i$ matches the candidate answer $\alpha_k$:
\begin{align}
    \sum_{i=1}^{L} \log P(p_i | g_i, \alpha_k, V) &= \sum_{i:s_i=\alpha_k} \log P(p_i | g_i, c_i=1, V) + \sum_{i:s_i\neq\alpha_k} \log P(p_i | g_i, c_i=0, V)
\end{align}
To isolate the terms relevant to the maximization over $\alpha_k$, we rewrite the second sum by noting that $\sum_{i:s_i\neq\alpha_k} (\cdot) = \sum_{i=1}^L (\cdot) - \sum_{i:s_i=\alpha_k} (\cdot)$:
\begin{align}
    &= \sum_{i:s_i=\alpha_k} \log P(p_i | g_i, c_i=1, V) + \sum_{i=1}^{L} \log P(p_i | g_i, c_i=0, V) - \sum_{i:s_i=\alpha_k} \log P(p_i | g_i, c_i=0, V) \nonumber \\
    &= \sum_{i:s_i=\alpha_k} \left( \log P(p_i | g_i, c_i=1, V) - \log P(p_i | g_i, c_i=0, V) \right) + \sum_{i=1}^{L} \log P(p_i | g_i, c_i=0, V)
\end{align}
The second term, $\sum_{i=1}^{L} \log P(p_i | g_i, c_i=0, V)$, is a sum over all $L$ generations. Since this term does not depend on the choice of the candidate answer $\alpha_k$, it is a constant with respect to our maximization problem and can be dropped. This leaves us with the $\alpha_k$-dependent part of the PRM signal:
\begin{equation}
    \text{PRM\_Term}(\alpha_k) = \sum_{i:s_i=\alpha_k} \log \frac{P(p_i | g_i, c_i=1, V)}{P(p_i | g_i, c_i=0, V)}
\end{equation}

\textbf{Part 2: The LLM Signal Term}

Next, we analyze the LLM term, simplifying $P(g_i | \alpha_k, M)$ to $P(s_i | \alpha_k, M)$. We use the model's probabilities: $P(s_i=\alpha_k | \alpha_k, M) = q_M$ and, for any $s_j \neq \alpha_k$, $P(s_j | \alpha_k, M) = (1-q_M)/(m-1)$. Let $N_k$ be the count of generations where the answer is $\alpha_k$, i.e., $N_k = |\{i | s_i = \alpha_k\}|$.
\begin{align}
    \sum_{i=1}^{L} \log P(s_i | \alpha_k, M) &= \sum_{i:s_i=\alpha_k} \log P(s_i=\alpha_k | \alpha_k, M) + \sum_{i:s_i\neq\alpha_k} \log P(s_i\neq\alpha_k | \alpha_k, M) \nonumber \\
    &= N_k \log q_M + (L - N_k) \log \frac{1-q_M}{m-1} \nonumber \\
    &= N_k \log q_M + L \log \frac{1-q_M}{m-1} - N_k \log \frac{1-q_M}{m-1} \nonumber \\
    &= N_k \left( \log q_M - \log \frac{1-q_M}{m-1} \right) + L \log \frac{1-q_M}{m-1}
\end{align}
Similar to the PRM term, the second part, $L \log \frac{1-q_M}{m-1}$, does not depend on the specific candidate answer $\alpha_k$ (as $q_M, L, m$ are fixed for a given question) and can be dropped from the objective function. The remaining term is:
\begin{equation}
    \text{LLM\_Term}(\alpha_k) = N_k \log \frac{q_M \cdot (m-1)}{1-q_M} = \sum_{i:s_i=\alpha_k} \log \frac{q_M \cdot (m-1)}{1-q_M}
\end{equation}

\textbf{Part 3: Combining the Terms}

Maximizing $\text{LL}(\alpha_k)$ is equivalent to maximizing the sum of the $\alpha_k$-dependent terms we derived. Let this new objective function be $\text{Score}(\alpha_k)$:
\begin{align}
    \text{Score}(\alpha_k) &= \text{PRM\_Term}(\alpha_k) + \text{LLM\_Term}(\alpha_k) \nonumber \\
    &= \sum_{i:s_i=\alpha_k} \log \frac{P(p_i | c_i=1, V)}{P(p_i | c_i=0, V)} + \sum_{i:s_i=\alpha_k} \log \frac{q_M \cdot (m-1)}{1-q_M} \nonumber \\
    &= \sum_{i:s_i=\alpha_k} \left( \log \frac{P(p_i | c_i=1, V)}{P(p_i | c_i=0, V)} + \log \frac{q_M \cdot (m-1)}{1-q_M} \right)
\end{align}
This is a weighted majority vote, where the final score for an answer $\alpha_k$ is the sum of weights $w_i$ for all generations $g_i$ that produced that answer. The weight for each generation is:
$$ w_i = \underbrace{\log \frac{P(p_i | c_i=1, V)}{P(p_i | c_i=0, V)}}_{\text{PRM Signal Term}} + \underbrace{\log \frac{q_M \cdot (m-1)}{1-q_M}}_{\text{LLM Signal Term}} $$
This completes the proof.

\section{Additional experiment results}
\label{app:add_exp}
\subsection{Detailed results on the MATH500 dataset}

We show the detailed results on each LLM-PRM pair on the MATH500 dataset in Table \ref{tab:accuracy_n_112}.

{
\color{black}
\subsection{Analysis on Kernel estimation based method.}

To validate the theoretical model itself and understand the relative underperformance of KDE WV, we ablate the optimal estimation with estimation from KDE. We add two baselines, KDE (Opt LLM term) and KDE (Opt PRM term), which substitute the LLM and PRM terms of KDE WV with optimal estimations, respectively. As shown in Table \ref{tab:opt_abl}, using optimal estimation for the LLM and PRM term both lead to significantly better results, demonstrating that the gap between KDE and "Optimal" is indeed due to the difficulty of practical estimation, not a flaw in the theory.

\begin{table}[t]
\caption{\color{black}Ablation of the Optimal estimation at Sample Size n=32}
\label{tab:opt_abl}
\begin{center}
\begin{adjustbox}{width=\textwidth}
\begin{tabular}{l l c c c c c c}
\multicolumn{1}{c}{\bf PRM} & \multicolumn{1}{c}{\bf Method} & \multicolumn{1}{c}{\bf Mistral-7B} & \multicolumn{1}{c}{\bf Qwen2.5-1.5B} & \multicolumn{1}{c}{\bf Qwen2.5-7B} & \multicolumn{1}{c}{\bf DeepSeek-1.5B} & \multicolumn{1}{c}{\bf DeepSeek-7B} & \multicolumn{1}{c}{\bf Average} \\ \hline
Qwen-
PRM800K-
7B & Optimal & 53.9 & 64.2 & 70.5 & 57.4 & 57.1 & 60.6 \\
& KDE WV (Opt PRM term) & 51.2 & 61.5 & 67.6 & 53.5 & 52.8 & 57.3 \\
& KDE WV (Opt LLM term) & 52.7 & 61.1 & 67.5 & 51.7 & 54.2 & 57.5 \\
\cmidrule(lr){2-8}
& \cellcolor{gray!20} KDE WV & \cellcolor{gray!20} 50.1 & \cellcolor{gray!20} 60.6 & \cellcolor{gray!20} 66.8 & \cellcolor{gray!20} 51.8 & \cellcolor{gray!20} 51.6 & \cellcolor{gray!20} 56.2 \\
\hline
Qwen-
PRM-
7B & Optimal & 56.8 & 67.7 & 74.5 & 66.4 & 63.6 & 65.8 \\
& KDE WV (Opt PRM term) & 54.0 & 65.2 & 71.4 & 61.3 & 61.2 & 62.6 \\
& KDE WV (Opt LLM term) & 53.7 & 63.1 & 69.4 & 63.2 & 62.8 & 62.4 \\
\cmidrule(lr){2-8}
& \cellcolor{gray!20} KDE WV & \cellcolor{gray!20} 50.7 & \cellcolor{gray!20} 61.6 & \cellcolor{gray!20} 68.6 & \cellcolor{gray!20} 55.1 & \cellcolor{gray!20} 56.3 & \cellcolor{gray!20} 58.5 \\
\hline
Llama3.1-
Mistral-
8B & Optimal & 57.1 & 65.6 & 73.4 & 64.3 & 62.2 & 64.5 \\
& KDE WV (Opt PRM term) & 55.1 & 63.0 & 70.6 & 58.5 & 58.5 & 61.2 \\
& KDE WV (Opt LLM term) & 54.5 & 62.8 & 69.0 & 62.0 & 62.3 & 62.1 \\
\cmidrule(lr){2-8}
& \cellcolor{gray!20} KDE WV & \cellcolor{gray!20} 51.6 & \cellcolor{gray!20} 61.6 & \cellcolor{gray!20} 67.5 & \cellcolor{gray!20} 55.1 & \cellcolor{gray!20} 57.1 & \cellcolor{gray!20} 58.6 \\
\hline
Llama3.1-
DS-
8B & Optimal & 56.4 & 65.7 & 73.0 & 64.2 & 62.0 & 64.2 \\
& KDE WV (Opt PRM term) & 54.1 & 62.9 & 70.0 & 58.4 & 58.5 & 60.8 \\
& KDE WV (Opt LLM term) & 53.8 & 61.8 & 69.1 & 61.2 & 60.6 & 61.3 \\
\cmidrule(lr){2-8}
& \cellcolor{gray!20} KDE WV & \cellcolor{gray!20} 50.3 & \cellcolor{gray!20} 60.7 & \cellcolor{gray!20} 67.5 & \cellcolor{gray!20} 54.0 & \cellcolor{gray!20} 53.8 & \cellcolor{gray!20} 57.3 \\
\hline
Skywork-
PRM-
1.5B & Optimal & 55.9 & 72.0 & 74.1 & 66.0 & 68.3 & 67.3 \\
& KDE WV (Opt PRM term) & 53.1 & 69.3 & 71.3 & 60.5 & 63.8 & 63.6 \\
& KDE WV (Opt LLM term) & 55.0 & 69.2 & 71.8 & 65.0 & 67.4 & 65.7 \\
\cmidrule(lr){2-8}
& \cellcolor{gray!20} KDE WV & \cellcolor{gray!20} 51.7 & \cellcolor{gray!20} 66.3 & \cellcolor{gray!20} 69.8 & \cellcolor{gray!20} 57.3 & \cellcolor{gray!20} 59.3 & \cellcolor{gray!20} 60.9 \\
\hline
\end{tabular}
\end{adjustbox}
\end{center}
\end{table}

}

\begin{table}[t]
\caption{Accuracy of Aggregation Methods at Sample Size n=112, MATH500 dataset}
\label{tab:accuracy_n_112}
\begin{center}
\begin{adjustbox}{width=\textwidth}
\begin{tabular}{l l c c c c c c}
\multicolumn{1}{c}{\bf PRM} & \multicolumn{1}{c}{\bf Method} & \multicolumn{1}{c}{\bf Mistral-7B} & \multicolumn{1}{c}{\bf Qwen2.5-1.5B} & \multicolumn{1}{c}{\bf Qwen2.5-7B} & \multicolumn{1}{c}{\bf DeepSeek-1.5B} & \multicolumn{1}{c}{\bf DeepSeek-7B} & \multicolumn{1}{c}{\bf Average} \\ \hline
Qwen-
PRM800K-
7B & Optimal & 37.3 $\pm$ 0.0 & 68.0 $\pm$ 0.0 & 69.7 $\pm$ 0.0 & 58.3 $\pm$ 0.0 & 62.7 $\pm$ 0.0 & 59.2 $\pm$ 0.0 \\
\cmidrule(lr){2-8}
& BoN & 33.0 $\pm$ 0.0 & 57.0 $\pm$ 0.0 & 62.0 $\pm$ 0.0 & 40.3 $\pm$ 0.0 & 49.0 $\pm$ 0.0 & 48.3 $\pm$ 0.0 \\
& MV & 29.3 $\pm$ 0.0 & 61.3 $\pm$ 0.0 & 66.0 $\pm$ 0.0 & 52.7 $\pm$ 0.0 & {57.0 $\pm$ 0.0} & 53.3 $\pm$ 0.0 \\
& Vanilla WV & 31.7 $\pm$ 0.0 & \textbf{63.7 $\pm$ 0.0} & 65.7 $\pm$ 0.0 & {53.0 $\pm$ 0.0} & 56.7 $\pm$ 0.0 & {54.1 $\pm$ 0.0} \\
\cmidrule(lr){2-8}
& \cellcolor{gray!20} KDE WV & \cellcolor{gray!20} 31.3 $\pm$ 1.5 & \cellcolor{gray!20} 62.0 $\pm$ 0.0 & \cellcolor{gray!20} \textbf{66.3 $\pm$ 0.0} & \cellcolor{gray!20} \textbf{53.0 $\pm$ 0.0} & \cellcolor{gray!20} 57.0 $\pm$ 0.0 & \cellcolor{gray!20} 53.9 $\pm$ 0.3 \\
& \cellcolor{gray!20} Linear WV & \cellcolor{gray!20} \textbf{33.3 $\pm$ 0.6} & \cellcolor{gray!20} 62.9 $\pm$ 0.8 & \cellcolor{gray!20} 66.3 $\pm$ 0.0 & \cellcolor{gray!20} 51.0 $\pm$ 2.9 & \cellcolor{gray!20} \textbf{57.0 $\pm$ 0.0} & \cellcolor{gray!20} \textbf{54.1 $\pm$ 0.5} \\
& \cellcolor{gray!20} Logit WV & \cellcolor{gray!20} 32.0 $\pm$ 0.0 & \cellcolor{gray!20} 62.9 $\pm$ 0.4 & \cellcolor{gray!20} 66.3 $\pm$ 0.0 & \cellcolor{gray!20} 52.7 $\pm$ 0.0 & \cellcolor{gray!20} 56.0 $\pm$ 1.2 & \cellcolor{gray!20} 54.0 $\pm$ 0.3 \\
\hline
Qwen-
PRM-
7B & Optimal & 44.3 $\pm$ 0.0 & 70.3 $\pm$ 0.0 & 73.0 $\pm$ 0.0 & 67.7 $\pm$ 0.0 & 67.7 $\pm$ 0.0 & 64.6 $\pm$ 0.0 \\
\cmidrule(lr){2-8}
& BoN & \textbf{40.0 $\pm$ 0.0} & 63.7 $\pm$ 0.0 & 67.3 $\pm$ 0.0 & 59.7 $\pm$ 0.0 & 65.0 $\pm$ 0.0 & 59.1 $\pm$ 0.0 \\
& MV & 29.3 $\pm$ 0.0 & 61.3 $\pm$ 0.0 & 66.0 $\pm$ 0.0 & 52.7 $\pm$ 0.0 & 57.0 $\pm$ 0.0 & 53.3 $\pm$ 0.0 \\
& Vanilla WV & 35.3 $\pm$ 0.0 & 64.0 $\pm$ 0.0 & 67.3 $\pm$ 0.0 & 57.7 $\pm$ 0.0 & 62.3 $\pm$ 0.0 & 57.3 $\pm$ 0.0 \\
\cmidrule(lr){2-8}
& \cellcolor{gray!20} KDE WV & \cellcolor{gray!20} 33.0 $\pm$ 1.2 & \cellcolor{gray!20} 63.0 $\pm$ 0.9 & \cellcolor{gray!20} 66.8 $\pm$ 0.8 & \cellcolor{gray!20} 56.1 $\pm$ 0.2 & \cellcolor{gray!20} 62.1 $\pm$ 0.4 & \cellcolor{gray!20} 56.2 $\pm$ 0.4 \\
& \cellcolor{gray!20} Linear WV & \cellcolor{gray!20} 36.0 $\pm$ 0.0 & \cellcolor{gray!20} 63.9 $\pm$ 0.8 & \cellcolor{gray!20} 67.8 $\pm$ 0.4 & \cellcolor{gray!20} 63.0 $\pm$ 0.6 & \cellcolor{gray!20} \textbf{65.7 $\pm$ 0.0} & \cellcolor{gray!20} 59.3 $\pm$ 0.2 \\
& \cellcolor{gray!20} Logit WV & \cellcolor{gray!20} 37.6 $\pm$ 0.4 & \cellcolor{gray!20} \textbf{64.2 $\pm$ 0.4} & \cellcolor{gray!20} \textbf{68.0 $\pm$ 0.6} & \cellcolor{gray!20} \textbf{63.3 $\pm$ 0.0} & \cellcolor{gray!20} 65.3 $\pm$ 0.0 & \cellcolor{gray!20} \textbf{59.7 $\pm$ 0.2} \\
\hline
Llama3.1-
Mistral-
8B & Optimal & 44.0 $\pm$ 0.0 & 65.3 $\pm$ 0.0 & 69.3 $\pm$ 0.0 & 63.0 $\pm$ 0.0 & 66.0 $\pm$ 0.0 & 61.5 $\pm$ 0.0 \\
\cmidrule(lr){2-8}
& BoN & \textbf{32.7 $\pm$ 0.0} & 50.0 $\pm$ 0.0 & 59.7 $\pm$ 0.0 & 49.3 $\pm$ 0.0 & 55.3 $\pm$ 0.0 & 49.4 $\pm$ 0.0 \\
& MV & 29.3 $\pm$ 0.0 & \textbf{61.3 $\pm$ 0.0} & 66.0 $\pm$ 0.0 & 52.7 $\pm$ 0.0 & 57.0 $\pm$ 0.0 & 53.3 $\pm$ 0.0 \\
& Vanilla WV & 30.0 $\pm$ 0.0 & 60.0 $\pm$ 0.0 & \textbf{67.0 $\pm$ 0.0} & 53.7 $\pm$ 0.0 & 58.7 $\pm$ 0.0 & 53.9 $\pm$ 0.0 \\
\cmidrule(lr){2-8}
& \cellcolor{gray!20} KDE WV & \cellcolor{gray!20} 30.8 $\pm$ 1.3 & \cellcolor{gray!20} 61.0 $\pm$ 0.0 & \cellcolor{gray!20} 66.0 $\pm$ 0.0 & \cellcolor{gray!20} 54.0 $\pm$ 0.0 & \cellcolor{gray!20} 58.2 $\pm$ 0.4 & \cellcolor{gray!20} 54.0 $\pm$ 0.3 \\
& \cellcolor{gray!20} Linear WV & \cellcolor{gray!20} 31.0 $\pm$ 0.3 & \cellcolor{gray!20} 60.2 $\pm$ 0.4 & \cellcolor{gray!20} 66.0 $\pm$ 0.0 & \cellcolor{gray!20} \textbf{55.4 $\pm$ 1.6} & \cellcolor{gray!20} 59.3 $\pm$ 0.6 & \cellcolor{gray!20} 54.4 $\pm$ 0.4 \\
& \cellcolor{gray!20} Logit WV & \cellcolor{gray!20} 31.8 $\pm$ 0.4 & \cellcolor{gray!20} 59.4 $\pm$ 1.3 & \cellcolor{gray!20} 66.3 $\pm$ 0.6 & \cellcolor{gray!20} 55.3 $\pm$ 1.5 & \cellcolor{gray!20} \textbf{59.4 $\pm$ 0.4} & \cellcolor{gray!20} \textbf{54.5 $\pm$ 0.3} \\
\hline
Llama3.1-
DS-
8B & Optimal & 39.0 $\pm$ 0.0 & 67.7 $\pm$ 0.0 & 69.7 $\pm$ 0.0 & 61.3 $\pm$ 0.0 & 66.0 $\pm$ 0.0 & 60.7 $\pm$ 0.0 \\
\cmidrule(lr){2-8}
& BoN & 27.0 $\pm$ 0.0 & 51.0 $\pm$ 0.0 & 62.3 $\pm$ 0.0 & 51.3 $\pm$ 0.0 & 59.0 $\pm$ 0.0 & 50.1 $\pm$ 0.0 \\
& MV & 29.3 $\pm$ 0.0 & \textbf{61.3 $\pm$ 0.0} & 66.0 $\pm$ 0.0 & 52.7 $\pm$ 0.0 & 57.0 $\pm$ 0.0 & 53.3 $\pm$ 0.0 \\
& Vanilla WV & \textbf{30.0 $\pm$ 0.0} & 59.3 $\pm$ 0.0 & {66.3 $\pm$ 0.0} & 55.3 $\pm$ 0.0 & 60.0 $\pm$ 0.0 & 54.2 $\pm$ 0.0 \\
\cmidrule(lr){2-8}
& \cellcolor{gray!20} KDE WV & \cellcolor{gray!20} 29.8 $\pm$ 0.2 & \cellcolor{gray!20} 61.0 $\pm$ 0.0 & \cellcolor{gray!20} 66.2 $\pm$ 0.2 & \cellcolor{gray!20} 55.1 $\pm$ 0.2 & \cellcolor{gray!20} 58.9 $\pm$ 1.1 & \cellcolor{gray!20} 54.2 $\pm$ 0.2 \\
& \cellcolor{gray!20} Linear WV & \cellcolor{gray!20} 29.3 $\pm$ 0.0 & \cellcolor{gray!20} 61.0 $\pm$ 0.0 & \cellcolor{gray!20} 66.1 $\pm$ 0.2 & \cellcolor{gray!20} 55.0 $\pm$ 1.2 & \cellcolor{gray!20} \textbf{61.2 $\pm$ 0.7} & \cellcolor{gray!20} 54.5 $\pm$ 0.3 \\
& \cellcolor{gray!20} Logit WV & \cellcolor{gray!20} 29.8 $\pm$ 0.4 & \cellcolor{gray!20} 60.1 $\pm$ 0.4 & \cellcolor{gray!20} \textbf{66.3 $\pm$ 0.0} & \cellcolor{gray!20} \textbf{57.1 $\pm$ 1.0} & \cellcolor{gray!20} 61.1 $\pm$ 0.4 & \cellcolor{gray!20} \textbf{54.9 $\pm$ 0.2} \\
\hline
Skywork-
PRM-
1.5B & Optimal & 40.7 $\pm$ 0.0 & 66.3 $\pm$ 0.0 & 71.0 $\pm$ 0.0 & 63.3 $\pm$ 0.0 & 66.3 $\pm$ 0.0 & 61.5 $\pm$ 0.0 \\
\cmidrule(lr){2-8}
& BoN & 38.3 $\pm$ 0.0 & 56.7 $\pm$ 0.0 & 63.3 $\pm$ 0.0 & 54.0 $\pm$ 0.0 & 58.7 $\pm$ 0.0 & 54.2 $\pm$ 0.0 \\
& MV & 29.3 $\pm$ 0.0 & 61.3 $\pm$ 0.0 & 66.0 $\pm$ 0.0 & 52.7 $\pm$ 0.0 & 57.0 $\pm$ 0.0 & 53.3 $\pm$ 0.0 \\
& Vanilla WV & 35.7 $\pm$ 0.0 & {61.7 $\pm$ 0.0} & \textbf{68.3 $\pm$ 0.0} & 58.0 $\pm$ 0.0 & \textbf{60.3 $\pm$ 0.0} & 56.8 $\pm$ 0.0 \\
\cmidrule(lr){2-8}
& \cellcolor{gray!20} KDE WV & \cellcolor{gray!20} 33.6 $\pm$ 0.7 & \cellcolor{gray!20} 61.3 $\pm$ 0.0 & \cellcolor{gray!20} 66.0 $\pm$ 0.6 & \cellcolor{gray!20} 55.6 $\pm$ 0.8 & \cellcolor{gray!20} 58.3 $\pm$ 0.7 & \cellcolor{gray!20} 55.0 $\pm$ 0.3 \\
& \cellcolor{gray!20} Linear WV & \cellcolor{gray!20} \textbf{39.0 $\pm$ 0.0} & \cellcolor{gray!20} 61.3 $\pm$ 0.0 & \cellcolor{gray!20} 67.1 $\pm$ 0.4 & \cellcolor{gray!20} 60.0 $\pm$ 1.2 & \cellcolor{gray!20} 59.0 $\pm$ 2.9 & \cellcolor{gray!20} \textbf{57.3 $\pm$ 0.6} \\
& \cellcolor{gray!20} Logit WV & \cellcolor{gray!20} 37.7 $\pm$ 0.0 & \cellcolor{gray!20} \textbf{61.7 $\pm$ 0.0} & \cellcolor{gray!20} 67.0 $\pm$ 0.0 & \cellcolor{gray!20} \textbf{60.6 $\pm$ 0.8} & \cellcolor{gray!20} 58.6 $\pm$ 3.5 & \cellcolor{gray!20} 57.1 $\pm$ 0.7 \\
\hline
\end{tabular}
\end{adjustbox}
\end{center}
\end{table}

{\color{black}

% \subsection{Confidence intervals}

\subsection{Additional result on self-rewarded verification signals}

CISC \cite{taubenfeld-etal-2025-confidence} is an approach that uses the confidence of the LLM as verification signals. While the verification signal is not from a PRM, our aggregation method is still applicable to CISC to utilize the signal more effectively. Here, we use the best-performing “P(True)” implementation of CISC to examine the effectiveness of the proposed method in aggregating self-rewarded verification signals. As we can see from the results on the MATH500 dataset below, the proposed method can more effectively aggregate the self-rewarded verification signal than vanilla weighted majority voting and Best-of-N.

\begin{table}[h]
    \centering
    \caption{Aggregating the self-rewarded verification signal from CISC.}
    \begin{tabular}{ll}
        \toprule
        \textbf{Aggregation Methods} & \textbf{Accuracy} \\
        \midrule
        BoN & 36.7 \\
        % \midrule
        MV & 61.3 \\
        % \midrule
        Vanilla WV & 61.3 \\
        \midrule
        \cellcolor{gray!20} {KDE WV} & \cellcolor{gray!20}61.3 \\
        % \midrule
        \cellcolor{gray!20} {Linear WV} & \cellcolor{gray!20}61.7 \\
        % \midrule
        \cellcolor{gray!20} {Logit WV} & \cellcolor{gray!20}\textbf{62.0} \\
        \bottomrule
    \end{tabular}
\end{table}

\subsection{Variance between the optimal weighting function of individual questions}

In addition to the five qualitative examples, we have added a quantitative metric. We first scale the output range of the scoring functions to (-0.5, 0.5), and then compute the Mean Squared Error (MSE) between the dataset-wise estimated PRM score function and the per-question-optimal function, aggregated over all questions in the test set. This provides a more comprehensive view of the variance. As we can see in Table \ref{tab:variance}, the MSE between the dataset-wise estimated PRM score function and the per-question-optimal function is inherently significant, which validates our hypothesis that the optimal weighting function for each individual question varies largely, acting as one major challenge towards the optimal weighting function.

\begin{table}[t]
\caption{The MSE between the dataset-wise estimated PRM score function and the per-question-optimal function.}
\label{tab:variance}
\begin{center}
\begin{tabular}{lccccc}
\textbf{PRMs} & \textbf{Mistral-7B} & \textbf{Qwen2.5-1.5B} & \textbf{Qwen2.5-7B} & \textbf{DeepSeek-7B} & \textbf{DeepSeek-1.5B} \\
\hline
Qwen-PRM800K-7B     &       0.165 &         0.173 &       0.361 &        0.219 &          0.108 \\
Qwen-PRM-7B         &       0.132 &         0.103 &       0.127 &        0.123 &          0.101 \\
Llama3.1-Mistral-8B &       0.103 &         0.123 &       0.257 &        0.182 &          0.110 \\
Llama3.1-DS-8B      &       0.116 &         0.142 &       0.166 &        0.173 &          0.183 \\
Skywork-PRM-7B      &       0.141 &         0.166 &       0.294 &        0.226 &          0.112 \\
Skywork-PRM-1.5B    &       0.162 &         0.171 &       0.224 &        0.261 &          0.152 \\
Shepherd-Mistral-7B &       0.465 &         0.380 &       0.263 &        0.365 &          0.423 \\
\hline
\end{tabular}
\end{center}
\end{table}

\subsection{Resolve the distribution inconsistency gap among questions}

We conducted an additional experiment using pseudo-PRM scores sampled from identical distributions for correct and incorrect responses. This simulates the behavior of our aggregation methods assuming the distribution inconsistency is resolved. As shown in Table \ref{tab:pseudo}, when PRM scores follow a consistent distribution independent of the specific question and response, the KDE method improves significantly, surpassing parametric aggregation approaches. This result aligns with our expectations and further validates both our analysis of KDE and our proposed theory.

\begin{table}[t]
\caption{Performance of aggregation methods on a pseudo PRM whose score for correct and incorrect samples are drawn from a single distribution independent of the questions, respectively.}
\label{tab:pseudo}
\begin{center}
\begin{adjustbox}{width=\textwidth}
\begin{tabular}{l l c c c c c c}
\multicolumn{1}{c}{\bf PRM} & \multicolumn{1}{c}{\bf Method} & \multicolumn{1}{c}{\bf Mistral-7B} & \multicolumn{1}{c}{\bf Qwen2.5-1.5B} & \multicolumn{1}{c}{\bf Qwen2.5-7B} & \multicolumn{1}{c}{\bf DeepSeek-1.5B} & \multicolumn{1}{c}{\bf DeepSeek-7B} & \multicolumn{1}{c}{\bf Average} \\ \hline
\textbf{Pseudo PRM} & Optimal & 40.7 & 67.9 & 70.8 & 63.0 & 67.0 & 61.9 \\
\cmidrule(lr){2-8}
& BoN & 33.1 & 56.1 & 62.9 & 51.1 & 57.4 & 52.1 \\
& MV & 29.0 & 59.7 & 66.0 & 53.3 & 58.3 & 53.3 \\
& Vanilla WV & 32.5 & 60.8 & 66.8 & 55.1 & 59.5 & 54.9 \\
\cmidrule(lr){2-8}
& \cellcolor{gray!20} KDE WV & \cellcolor{gray!20} \textbf{38.3} & \cellcolor{gray!20} \textbf{62.4} & \cellcolor{gray!20} \textbf{66.9} & \cellcolor{gray!20} \textbf{57.3} & \cellcolor{gray!20} 60.4 & \cellcolor{gray!20} \textbf{57.1} \\
& \cellcolor{gray!20} Linear WV & \cellcolor{gray!20} 33.6 & \cellcolor{gray!20} 60.3 & \cellcolor{gray!20} 66.3 & \cellcolor{gray!20} 55.5 & \cellcolor{gray!20} 61.6 & \cellcolor{gray!20} 55.5 \\
& \cellcolor{gray!20} Logit WV & \cellcolor{gray!20} 33.7 & \cellcolor{gray!20} 60.3 & \cellcolor{gray!20} 66.6 & \cellcolor{gray!20} 56.5 & \cellcolor{gray!20} \textbf{61.8} & \cellcolor{gray!20} 55.8 \\
\hline
\end{tabular}
\end{adjustbox}
\end{center}
\end{table}

\section{Discussion on calibration cost}

the calibration cost is a negligible one-time investment compared to the substantial, recurring test-time savings. Our Logit Weighted Voting method matches the baseline accuracy while strictly requiring only 21.3\% of the computation, effectively saving 78.7\% of the compute per query. Because our calibration relies on a small set of 100 questions, \textbf{this initial computational cost is recovered after serving just 127 user queries} ($100 / 0.787 \approx 127$). In any practical deployment serving thousands of requests, this break-even point is reached almost immediately, rendering the calibration overhead insignificant relative to the long-term efficiency gains.

}

\section{Limitations and Future Work}
\textbf{Limitations.} Our work, while demonstrating significant gains, has several limitations. First, our theoretical framework relies on conditional independence assumptions which are simplifications of the complex dependencies between generated responses. Second, our proposed calibration methods learn a single, global weighting function. As our analysis in Section 4.3 shows, the truly optimal function varies on a per-question basis, and our attempts to learn a meta-model to predict these per-question functions were unsuccessful, indicating this is a non-trivial challenge. Finally, while effective, our calibration methods require a small, one-time labeled dataset, and our evaluation has been focused on the domain of mathematical reasoning.

\textbf{Future Work.} These limitations point to several promising avenues for future research. The primary challenge is to bridge the gap between global and per-question optimal weighting. Developing methods that can adapt the weighting function at test time based on question-specific features or initial response characteristics could yield further performance gains. Another direction is to explore the generalization of our calibrated aggregation framework to other domains beyond mathematics and to other TTS paradigms, such as sequential refinement or tree-of-thoughts search. Lastly, investigating semi-supervised or unsupervised calibration techniques could reduce the reliance on labeled data, making the approach more accessible and scalable.

\textbf{Ethics Statement.} This research aims to improve the computational efficiency of large language models, a goal with positive ethical implications. By achieving higher performance with less computational cost, our methods can contribute to reducing the energy consumption and environmental impact associated with deploying state-of-the-art AI systems. The datasets and models used in this work are standard, publicly available benchmarks and open-source models widely used by the research community. Our work does not involve human subjects, nor does it introduce new capabilities that would increase the risk of misuse of language models. On the contrary, by developing a more nuanced understanding of how to verify and aggregate machine-generated reasoning, this research could contribute to making LLMs more reliable and less prone to generating confident but incorrect outputs. All authors have read and adhered to the ICLR Code of Ethics.

\textbf{Reproducibility Statement.} We are committed to ensuring the reproducibility of our work. All LLMs and PRMs used are publicly available models, and we provide details of these models in Section 5.1. The experiments were conducted on the MATH dataset, a standard public benchmark. Our data splitting procedure for calibration and testing is described in Section 5.1. The full theoretical derivation for Theorem 3.2 is provided in Appendix A.1. Key implementation details for our proposed calibration methods, including the KDE procedure and the grid search ranges for parametric models, are described in Section 4. 

\textbf{LLM Usage.} We use LLMs to polish the writing of this paper, including identifying spelling, grammar mistakes. All suggestions from the LLM are verified by the authors before being incorporated into the paper.
% We will make our code for generation, scoring, and aggregation publicly available as part of the supplementary materials.

% \subsubsection*{Author Contributions}
% If you'd like to, you may include a section for author contributions as is done
% in many journals. This is optional and at the discretion of the authors.

% \subsubsection*{Acknowledgments}
% Use unnumbered third level headings for the acknowledgments. All
% acknowledgments, including those to funding agencies, go at the end of the paper.

\end{document}